\newif\ifarxiv\arxivtrue 
\newtheorem{theorem}{Theorem}[section]
\newtheorem{lemma}[theorem]{Lemma}
\theoremstyle{definition}
\newtheorem{definition}[theorem]{Definition}
\theoremstyle{remark}
\newtheorem{remark}[theorem]{Remark}
\newtheorem{ass}{Assumption}[]
\newcommand{\thistheoremname}{}
\theoremstyle{plain}
\newtheorem*{genericthm*}{\thistheoremname}
\newenvironment{namedthm*}[1]
  {\renewcommand{\thistheoremname}{#1}%
   \begin{genericthm*}}
  {\end{genericthm*}}
\DeclareMathOperator*{\argmin}{arg\,min}
\DeclareMathOperator*{\argmax}{arg\,max}
\DeclareMathOperator*{\E}{\mathbb{E}}
\DeclareMathOperator*{\supp}{supp}
\definecolor{dark-blue}{rgb}{0,0,0.7}
\definecolor{dark-green}{rgb}{0,0.6,0}
\definecolor{dark-red}{rgb}{0.8,0,0}
\definecolor{dark-yellow}{rgb}{0.8, 0.8, 0.0}
\definecolor{ppt-red}{HTML}{C00000}
\definecolor{ppt-blue}{HTML}{0F9ED5}
\newcommand{\cmark}{\textcolor{dark-green}{\ding{51}}}
\newcommand{\xmark}{\textcolor{dark-red}{\ding{53}}}
\newcommand{\qmark}{\textcolor{gray}{{\boldmath$\sim$}}}
\title{Supervised Reward Inference}
\author{Will Schwarzer\textsuperscript{1}, Jordan Schneider\textsuperscript{2,$\dagger$}, Philip S. Thomas\textsuperscript{1}, Scott Niekum\textsuperscript{1}}
\date{}
\keywords{Reinforcement Learning, Inverse Reinforcement Learning, Reward Inference, Suboptimal Demonstrations, Goal Inference, Imitation Learning.}
\begin{document}

\ifarxiv\else\makeCover\fi
\maketitle

\begin{abstract}
   \noindent Existing approaches to reward inference typically assume that humans provide demonstrations according to specific behavior models. However, humans often indicate their goals through a wide range of behaviors, from actions that are suboptimal due to poor planning or execution to behaviors intended to communicate goals rather than achieve them. One existing solution for inferring rewards from such behavior -- provided it is drawn from the same distribution at training and deployment -- is to construct a dataset of behavior paired with \textit{known} rewards, and to learn the mapping from behavior to rewards; however, prior methods in this family face notable limitations, such as restrictions to tabular settings. Given such a dataset, we propose instead that supervised learning offers a parsimonious yet powerful solution, which we term Supervised Reward Inference (SRI). Theoretically, we prove that SRI is asymptotically Bayes-optimal under standard assumptions. Empirically, SRI achieves near-ceiling performance on a prior benchmark for reward inference from suboptimal behavior, while on Meta-World robotics tasks, it infers rewards from even arbitrarily suboptimal demonstrations as accurately as those demonstrations allow. Finally, we demonstrate our framework's universality with straightforward generalizations to action- and goal-prediction.
\end{abstract}

\section{Introduction}
\label{sec:intro}

In order for artificial agents to achieve human goals, humans must first communicate their goals to the agents. While the traditional method of goal communication in reinforcement learning (RL) is explicit reward specification, specifying correct rewards can be challenging \citep{hadfieldmenell2017inverse, ratner2018reward, booth23perils, knox2024how}. 
This has highlighted the need for alternative modalities for reward specification, such as human demonstrations, the modality studied in inverse reinforcement learning (IRL) \citep{ng2000algorithms}.

IRL generally assumes that demonstrations are generated according to a specific model of human behavior, which ranges from noisy optimality \citep{ng2000algorithms, Ramachandran2007bayesian, zheng2014robust, chan2021human, laidlaw2022boltzmann, barnett2023active} to bounded reasoning \citep{evans2015learning, evans2016learning, zhi2020online} and beyond. 
While such models produce solvable learning problems, they are still far from accurate descriptions of the entirety of human behavior, for two reasons: 1) real-world human behavior demonstrates all of these suboptimalities at once, and many more not yet accounted for \citep{kahneman2011thinking, shah2019feasibility}; and 2) people frequently use entirely non-optimal behavior such as gestures in order to communicate goals.

In this paper, we investigate one approach to learning rewards from the full range of human behavior: framing a human's actions as an \textit{indication} of their goals, rather than an attempted \textit{optimization} of them. Similar to previous work \citep{enayati2018skill, reddy2018where, shah2019feasibility}, we assume access to a dataset of behaviors (e.g., demonstrations or gestures) and their associated ground-truth rewards.
However, rather than explicitly learning a behavior model that maps these rewards onto the behaviors, we use supervised learning to directly learn a mapping of behaviors onto rewards. This approach, which we call Supervised Reward Inference (SRI), \textbf{is an asymptotically Bayes-optimal approach for both reward inference and apprenticeship learning} (policy inference) for any fixed class of behavior under standard assumptions (Section \ref{sec:theory}); the behavior class must be fixed between training and deployment, but need not be known or modeled.

Surprisingly, our experiments also show that \textbf{SRI is a \textit{practical} method for learning rewards and policies} from a behavior--reward dataset. Despite using no inductive biases whatsoever about RL, our SRI implementation achieves near-ceiling performance on a known benchmark for reward inference from suboptimal behavior \citep{shah2019feasibility}, and demonstrates strong performance on Meta-World reach and pick-place tasks, even though our RL experiments showed that pick-place policy learning requires an exceptionally accurate reward function (Section \ref{sec:experiments}). These results support the idea that less-biased learning algorithms often outperform more-biased algorithms when provided with enough data \citep{sutton2019bitter, chen2021decision}; indeed, SRI's performance improves smoothly with data quantity, though aside from pick-place it can still perform well on challenging reward inference tasks even with limited data (Table \ref{tab:sri-data-efficiency}).

Finally, the supervised inference template underlying SRI extends naturally beyond rewards. In Appendix \ref{app:additionalresults}, we propose and evaluate Supervised Goal Inference (SGI) and Supervised Action Inference (SAI), variants of SRI that demonstrate strong performance when the available dataset instead pairs behavior with parameterizations of the reward function or optimal actions. More broadly, viewing behavior as an indication of intent rather than an optimization of it suggests a family of supervised approaches -- of which SRI, SGI and SAI are instances -- that replace behavior-model engineering with data. Because these methods are standard supervised learning, they can directly benefit from advances in representation learning and pretrained models (Section \ref{sec:conclusion}).

\section{Related Work}
\label{sec:relatedwork}

To the best of our knowledge, our work is the first to evaluate direct supervised learning for reward inference from unlabeled, arbitrarily suboptimal behavior.
Table~\ref{tab:ml_transposed_comparison_transposed} compares SRI to the most closely related method families; here, we review these and other prior work extending behavior imitation \citep{pomerleau1988alvinn} and reward inference \citep{ng2000algorithms} to use training data, account for various kinds of suboptimality, and learn from behavior classes other than demonstrations.

\textbf{Meta-IRL} Prior work has studied the use of multi-task demonstration datasets to improve the efficiency of IRL inference, under the names of multi-task IRL \citep{babes2011apprenticeship, dimitrakakis2012bayesian, choi2012nonparametric, gleave2018multitask}, meta-IRL \citep{xu2019learning, yu2019meta, ghasemipour2019smile, wang2021meta}, and lifelong IRL \citep{mendez2018lifelong}. Like SRI, such works allow fast, data-efficient reward inference, but do not directly enable reward inference from suboptimal demonstrations.

\textbf{Behavior model misspecification} \citet{armstrong2018occam} showed that it is generally impossible to simultaneously infer a demonstrator's reward function and their behavior model; thus, reward inference methods must either assume the behavior model (as do most methods) or learn it from data (as do SRI and \citealp{shah2019feasibility}). Later, \citet{Skalse2022MisspecificationII} showed theoretically that assuming a behavior model almost always produces incorrect reward functions when the model is incorrect. For example, assuming Boltzmann rationality only provides an asymptotically correct optimal policy set for ground-truth behavior models that take optimal actions most frequently.
Other works quantified the error induced by incorrect behavior models: \citet{shah2019feasibility} and \citet{chan2021human} showed that a variety of misspecifications can induce dramatically incorrect reward functions, while \citet{hong2023sensitivity} showed that in continuous-action MDPs, even arbitrarily small errors in the behavior model can result in almost arbitrarily large errors in the inferred reward parameters. 

\textbf{Learning from suboptimal demonstrations} Reward inference algorithms have been developed to account for a wide variety of suboptimalities, including hyperbolic discounting, myopia, false beliefs and bounded cognition \citep{evans2015learning, evans2016learning, zhi2020online}, autocorrelated action noise \citep{zheng2014robust}, mistaken transition models \citep{reddy2018where}, and risk-sensitive behavior \citep{singh2018risk, ratliff2020inverse}. This work also shows humans have little trouble inferring and accounting for each other's suboptimality \citep{evans2016learning, zhi2020online}.

\citet{shiarlis2016inverse} studied a setting where demonstrations are arbitrarily suboptimal, but are labeled as failures (to avoid) or successes; similarly, \citet{brown2020safe} used preferences over suboptimal demonstrations. \citet{brown2020better} and \citet{chen2021learning} extend this approach by using noise injection to automatically rank synthetic demonstrations to train a noise-averse reward model.

\textbf{Learning from general behavior} \citet{hadfieldmenell2016cooperative} and \citet{malik2018efficient} studied how to provide and learn from demonstrations selected according to their information content for the learner, rather than their accumulated reward. \citet{shah2019feasibility} developed the research direction closest to SRI. They present two algorithms for human behavior models and rewards, based on differentiable planners, in particular value iteration networks (VINs) \citep{tamar2016value}. Their first algorithm uses a setting similar to SRI's: it trains a value iteration network on a dataset of demonstration policies and corresponding reward functions to predict a policy given a reward function. At inference time, given a policy, the reward function is recovered through gradient descent. Unlike SRI, however, it has not been extended past tabular domains.

\textbf{Supervised and foundation-model reward learning} A growing body of robotics work learns reward models by direct supervision, either deriving progress or preference labels from demonstrations themselves \citep[e.g.,][]{yang2024rank2reward, tan2025robodopamine, liang2026robometer} or conditioning on a direct task specification (typically a language instruction) and obtaining rewards from vision-language alignment or foundation-model feedback \citep[e.g.,][]{ma2023liv, wang2024rlvlmf, alakuijala2024video, zhang2025rewind, zhai2025vlac, lee2026roboreward}. These methods share SRI's supervised template, but their task input is a direct specification: a language description, or demonstrations whose ordering or annotations supply the reward labels. SRI instead infers the task from a finite set of unlabeled trajectories that may be ambiguous and arbitrarily suboptimal, thus posing different technical problems and demanding a different architecture and training algorithm.

\begin{table*}
\caption{Comparison of several reward inference methods. Inverse planning (IP) refers to Algorithm 1 by \citet{shah2019feasibility}. Learning from observation (LfO) refers to inference from observation-only trajectories. The final three capabilities refer to inferring at least Bayes-optimal rewards from demonstrations generated according to the specified type of behavior; for the latter two types, this assumes access to a dataset of demonstrations drawn from the same distribution (including the same behavior model) as the inference demonstrations. For CIRL \citep{hadfieldmenell2016cooperative}, we assume a non-interactive CIRL game with one demonstration and one deployment phase. \qmark~indicates that a framework may in principle support a capability that has not yet been demonstrated.
  }
  \centering
  \begin{tabularx}{\textwidth}{>{\hsize=2\hsize}X | >{\centering\arraybackslash\hsize=0.7\hsize}X>{\centering\arraybackslash\hsize=0.7\hsize}X>{\centering\arraybackslash\hsize=0.7\hsize}X>{\centering\arraybackslash\hsize=0.7\hsize}X | >{\centering\arraybackslash\hsize=0.7\hsize}X}
    \toprule
    Capability & IRL & Meta IRL & CIRL & IP & \textbf{SRI} \\
    \midrule
    No reward dataset required & \cmark & \cmark & \cmark & \xmark & \xmark \\
    Few-shot inference & \xmark & \cmark & \xmark & \xmark & \cmark \\
    Simulator-free inference & \xmark  & \cmark & \xmark & \cmark & \cmark \\
    LfO possible & \cmark & \cmark & \cmark & \xmark & \cmark \\
    Continuous MDPs & \cmark & \cmark & \qmark & \xmark & \cmark \\
    Known suboptimal behavior & \cmark & \cmark & \cmark & \cmark & \cmark \\
    Lim. suboptimal behavior & \xmark & \xmark & \xmark & \cmark & \cmark \\
    Arbitrary behavior & \xmark & \xmark & \xmark & \qmark & \cmark \\
    \bottomrule
  \end{tabularx}
  \label{tab:ml_transposed_comparison_transposed}
\end{table*}

\section{Background}
\label{sec:background}

While SRI does not assume that human behavior is generated by any particular learning or control algorithm, we still formalize the notions of ``behavior'' and ``goals'' using notation from reinforcement learning (RL) \citep{SuttonBarto} and inverse reinforcement learning (IRL) \citep{ng2000algorithms}, which we review here.

Control problems studied in RL are formalized mathematically as a Markov decision process (MDP). An MDP $\mathcal{M} = (\mathcal{S}, \mathcal{A}, p, r, d_0, \gamma)$ consists of possibly infinite sets of states, $\mathcal{S}$, and actions, $\mathcal{A}$; a transition function $p: \mathcal{S}\times \mathcal{A} \times \mathcal{S}  \rightarrow [0, 1]$; a reward function $r: \mathcal{S} \times \mathcal{A} \rightarrow \mathbb{R}$; an initial state distribution $d_0: \mathcal{S} \rightarrow [0, 1]$; and a discount factor $\gamma \in [0, 1]$.

A policy $\pi: \mathcal{S} \times \mathcal{A} \rightarrow [0, 1]$ is a function describing the agent's probability of selecting an action in any given state; let $\Pi$ be the set of all policies. Policies interact with an MDP to produce stochastic processes known as episodes: $(S_0, A_0, R_0, S_1, A_1, R_1, \dots)$ such that $S_0 \sim d_0$, $A_t \sim \pi(S_t, \cdot)$, $R_t = r(S_t, A_t)$, and $S_{t+1} \sim p(S_t, A_t, \cdot)$. 
MDPs can also be partially observable, meaning they also have a set of observations $\mathcal{O}$ and an emission function $\Omega: \mathcal{S} \times \mathcal{O} \rightarrow [0, 1]$. In this case, episodes include observations generated by the emission function, $O_i \sim \Omega(S_i, \cdot)$, and the policy $\pi: \mathcal{O} \times \mathcal{A} \rightarrow [0, 1]$ instead maps observations to distributions over actions: $A_i \sim \pi(O_i, \cdot)$.

In the notation used in this paper, we assume for simplicity that the reward function can be described as a function of state alone: $\left(R_t \perp\!\!\!\perp A_t | S_t\right)$. Thus, we will write $r(S_t)$ for brevity. Such state-based rewards are common in goal-based robotic manipulation tasks, for example. However, our methods apply equally well to the case where actions influence the reward.

The objective of an RL agent in an MDP is to accumulate as much reward as possible, subject to exponential time discounting. Formally, the discounted return starting at time $t$ is the sum of rewards at and after $t$, discounted exponentially by $\gamma$: $G_t = \sum_{i=0}^\infty \gamma^{i}R_{t+i}$. The expected discounted episodic return in an MDP is the expected value of $G_0$ for a given policy: $J(\pi)=\E[G_0 ; \pi]$, where semicolon $\pi$ indicates that $A_t \sim \pi(S_t, \cdot)$. RL in a given MDP is thus the optimization problem $\argmax_\pi J(\pi)$. Let $\pi^*$ be such an optimal policy, and let its expected return be $J^*$.

\textbf{Reward Inference.}
In reward inference problems such as SRI, IRL, or CIRL, the reward function is unknown to the agent, but typically train-time access to the underlying reward-free MDP, $\mathcal{M} \setminus \{r\}$, is still assumed.\footnote{In this paper, we use ``reward inference'' to denote reward inference from behavior, such as IRL.} In place of the reward, some number of trajectories in the environment are provided, consisting of sequences of either observations, states, or states and actions.\footnote{This setup describes a two-phase CIRL game; CIRL allows for multiple learning-deployment interactions between agent and demonstrator, but such an interactive problem setup is beyond the scope of this paper.}
In settings where the human behaves roughly optimally for the task they intend the imitator to complete, these trajectories are called \textit{demonstrations}, but for generality we call them ``behavior trajectories''. In this paper, we will assume the most difficult setting, where trajectories are sequences of observations of length $L_B$: $\tau = (o_0, o_1, \dots, o_{L_B}) \in \mathcal{T}$, where $\mathcal{T} := (\mathcal{O})^{L_B}$.
Thus, to indicate a single task, the agent is provided with $\left\{\tau_n\right\}_{n=1}^N \in \mathcal{T}^N$.

The agent's goal is to use these trajectories to infer the reward function. The reward function itself is sometimes the final output, but our focus is on optimizing the inferred reward function and evaluating the resulting policy against the hidden ground truth reward function. 

\subsection{Learning Behavior Models from Known Rewards}
\label{sec:learningfromknownrewards}

Reward inference models traditionally infer a completely unknown reward function $r$ by assuming that the trajectories $\{\tau_n\}$ are generated according to a specific, known behavior model $b: \mathcal{R} \rightarrow \Pi$, where $\mathcal{R}$ is the space of reward functions \citep{Ramachandran2007bayesian, Skalse2022MisspecificationII}. 

However,
inferring rewards from behavior alone is fundamentally underdetermined unless one assumes a correct mapping from rewards to behavior (a planner/behavior model): the same observed behavior can typically be explained by many reward--planner pairs, and simplicity biases are insufficient to recover the agent’s ``true'' preferences in general \citep{armstrong2018occam}.
Related no-free-lunch results from the corrupted-reward literature show that when the channel from latent reward to observed signals is unconstrained, any fixed algorithm can incur large regret in at least one observationally indistinguishable world \citep{everitt2017reinforcement}.
Even when committing to a particular behavioral model class, inverse reward inference can be highly brittle to model misspecification, including in continuous-control settings \citep{Skalse2022MisspecificationII,hong2023sensitivity}.
Consequently, behavior-only methods inevitably rely on strong assumptions about the reward-to-behavior map in order to demonstrate reasonable performance.

A pragmatic solution is to introduce reward supervision by collecting behavior on tasks with known or labelable rewards, which can be used either to fit a behavior model (as in prior work) or, as we propose, to directly learn the inverse mapping from behavior to rewards. For example, \citet{reddy2018where}, \citet{enayati2018skill}, \citet{carrenomedrano2019incrementalestimationofusersexpertiselevel}, and \citet{ghosal2023effectmodelinghumanrationalitylevel} use demonstrations by humans in tasks with known rewards to infer parameters of their behavior models, such as their Boltzmann-rationality temperature parameter or their internal beliefs about the transition model, $p$; similarly, \citet{milliken2017modelinguserexpertiseforchoosinglevelsofsharedautonomy} estimate a human's expertise in a driving task using the knowledge that hitting obstacles is undesirable. Finally, in their first algorithm, \citet{shah2019feasibility} use a dataset of known reward functions and known policies to learn any behavior model that can be produced by a value iteration network in a tabular MDP.

\section{Supervised Reward Inference}
\label{sec:sri}

We propose a simpler approach for performing reward inference using a dataset of human behavior for known rewards. Rather than training a parameterized behavior model from data, we simply train the inverse model to directly map trajectories to reward functions. 

This direct reward function inference approach, if it performed $N$-shot inference (i.e., used $N$ trajectories as input), would produce a model from trajectories to reward functions \(f_\theta: \mathcal{T}^N \rightarrow \mathcal{R}\), 
allowing reward inference on a single state $s$ through
\(f_\theta(\{\tau_n\}_{n=1}^N)(s).\) 

Such an $f_\theta$ would be applicable in those cases studied previously where the exact reward function is known \citep{shah2019feasibility}, or where a parameterized form $r_\psi$ of the reward function is known, in which case $f_\theta$ could use $\psi$ as its target. However, this is not a general solution, as true human reward functions in complex environments are unlikely to have known parameterizations (see Section \ref{sec:datasetconstruction}).

Instead, we teach $f_\theta$
to predict samples of the reward given a state as input: $r(s) \approx f_\theta(\{\tau_n\}_{n=1}^N, s)$. A further enhancement offers an immense efficiency gain: the behavior trajectories (and thus task) 
need not be reprocessed at every timestep, and can instead 
be preprocessed into a task encoding. We call the resulting task encoder $f_{\theta_f}$ (the \textcolor{ppt-blue}{blue path} in Figure \ref{fig:arch}), and the state-encoder and final reward model $g_{\theta_g}$ (the \textcolor{ppt-red}{red path} in Figure \ref{fig:arch}).\footnote{Note that this structure also allows us to train multi-task policies by conditioning on the task embedding output from $f_{\theta_f}$ \citep{yu2019meta}, which we explore in experiments with reach tasks.} This final structure allows us to formally define SRI.

\begin{figure}
  \centering
  \begin{minipage}[t]{\linewidth}
    \captionsetup{type=algorithm}      
    \begin{algorithm}[H]
    \caption{SRI Training with Gradient Descent}
    \label{alg:sri}
    \begin{algorithmic}[1] 
        \State {\bfseries Input:} Num tasks $K$, num dems/task $N_T$, num state-reward samples/task $N_s$, batch size $M$, num inference dems $N_I$, learning rate $\alpha$, and dataset $\mathcal{D} = \{(\{\tau_{k,i}\}_{i=1}^{N_T}, \{(s_{k,j}, r_{k,j})\}_{j=1}^{N_s})\}_{k=1}^K$.
        \State Initialize encoder $f_{\theta_f}: \mathcal{T}^{N_I} \rightarrow \mathbb{R}^d$ and reward model $g_{\theta_g}: \mathcal{S} \times \mathbb{R}^d \rightarrow \mathbb{R}$. Let $\theta = (\theta_f, \theta_g)$.
        \Repeat
            \State Sample batch $\{(\mathcal{T}_k, \mathcal{S}_k)\}_{k=1}^M \sim \mathcal{D}$, where $\mathcal{T}_k=\{\tau_{k,i}\}_{i=1}^{N_T}$ are trajectories and $\mathcal{S}_k=\{(s_{k,j}, r_{k,j})\}_{j=1}^{N_s}$ are state-reward pairs for task $k$.
            \State For each task $k \in [1,M]$, randomly select $N_I$ trajectories $\mathcal{T}'_k \subseteq \mathcal{T}_k$.
            \State Compute task embeddings $(\psi_k)_{k=1}^M = (f_{\theta_f}(\mathcal{T}'_k))_{k=1}^M$.
            \State Predict rewards $(\hat{r}_{k,j})_{k \in [1,M], j \in [1,N_s]} = (g_{\theta_g}(s_{k,j}, \psi_k))_{k \in [1,M], j \in [1,N_s]}$.
            \State Compute loss $\mathcal{L}(\theta) \leftarrow \frac{1}{MN_s}\sum_{k=1}^M \sum_{j=1}^{N_s} (\hat{r}_{k,j} - r_{k,j})^2$.
            \State Update parameters $\theta \leftarrow \theta - \alpha \nabla_{\theta} \mathcal{L}(\theta)$.
        \Until{convergence}
        \State {\bfseries Output:} Learned parameters $\theta = (\theta_f, \theta_g)$.
    \end{algorithmic}
    \end{algorithm}
  \end{minipage}
\end{figure}



\begin{definition}[Supervised Reward Inference]

Given: \textbf{a}) a random set of behavior trajectories $\{T_n\}_{n=1}^N \in \mathcal{T}^N$ and a random reward function $R$ jointly following a distribution $\mathcal{D}_T$; \textbf{b}) a random state $S$ following some distribution $\mathcal{D}_S$; \textbf{c}) 
parameterized function families $f_{\theta_f}: \mathcal{T}^N \rightarrow \Psi$ and $g_{\theta_g}: \mathcal{S} \times \Psi \rightarrow \mathbb{R}$, where $\Psi$ is a task embedding space; and \textbf{d}) a regression loss function $\mathcal{L}: \mathbb{R} \times \mathbb{R} \rightarrow \mathbb{R}$, \textit{supervised reward inference} (SRI) is the following \textbf{multi-task} minimization problem:
\begin{equation} 
    \argmin_{\theta_f, \theta_g}\mathbb{E}\bigg[\mathcal{L}\bigg (g_{\theta_g}\big(S, f_{\theta_f}(\{T_n\}_{n=1}^N) \big), R(S)\bigg) \bigg ]. \label{sri}
\end{equation}
\end{definition}
 
See Algorithm \ref{alg:sri} for an example gradient-descent-based implementation. Note that the state samples that are labeled with rewards do not need to be taken from the behavior trajectories. Indeed, it is often best for states and behaviors to be separate: the state samples should be representative of the state space optimized during RL, but the behaviors need not be. Note that the distribution $\mathcal{D}_S$ is a design choice: SRI will learn to predict rewards accurately on states drawn from $\mathcal{D}_S$, so it should be chosen to cover the states visited by downstream RL.

\subsection{Training Data}
\label{sec:datasetconstruction}
SRI assumes a task-indexed supervised dataset. For each task $k$, the training data contain (i)~a set of behavior trajectories $\{\tau_{k,i}\}$ and (ii)~a set of reward-labeled states $\{(s_{k,j}, r_{k,j})\}$. These two sets need only correspond to the same task; the reward-labeled states need not be states visited in the trajectories themselves. This separation is useful because the trajectories identify the task, while the labeled states can be sampled independently to cover the part of the state space on which the downstream RL policy will later optimize. As discussed in Section~\ref{sec:learningfromknownrewards}, one natural source of such data is behavior collected under known reward functions, where many more state-reward labels are comparatively cheap once task rewards are available; our experiments follow this approach (see Tables~\ref{tab:reachdata} and~\ref{tab:ppdata}). More generally, SRI replaces assumptions about a fixed reward-to-behavior map with an assumption that such task-indexed reward supervision is available.

\begin{wrapfigure}[18]{r}{0.48\textwidth}
\centering
    \includegraphics[width=\linewidth]{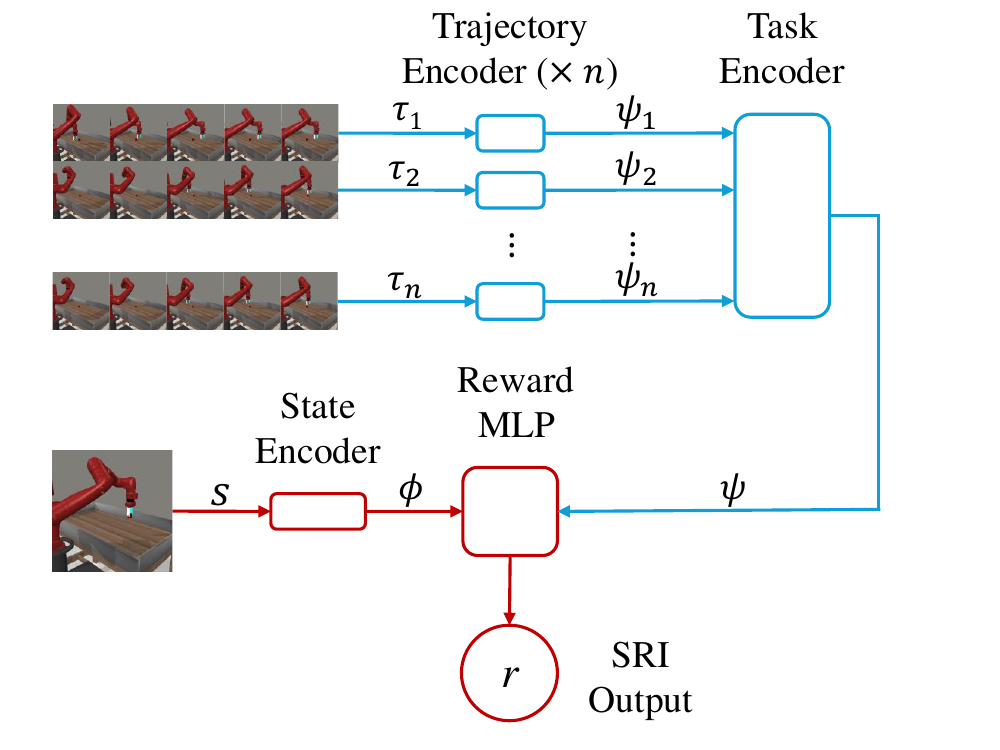}
    \captionof{figure}{%
      Trajectories are encoded with a transformer and pooled by a set
      transformer into a task embedding $\psi$ (\textcolor{ppt-blue}{blue}, once per task);
      each state is mapped to an observation embedding $\phi$ by two
      small MLPs (\textcolor{ppt-red}{red}, per timestep); a final MLP combines
      $(\psi,\phi)$ to predict reward.}
    \label{fig:arch}
\end{wrapfigure}

\subsection{Example Architecture}
\label{sec:architecture}

Figure \ref{fig:arch} demonstrates the abstract structure of the architecture that we used for SRI in our experiments; see Appendix \ref{app:arch} for exact details. All $g_{\theta_g}$ networks used (red path) were MLPs, while the trajectory encoder was a transformer \citep{vaswani2017attention}, and the task encoder was a set transformer \citep{lee2019set}.

\section{Theoretical Results}
\label{sec:theory}



Are there any classes of behavior which are too suboptimal or arbitrary even for SRI? For which classes of reward inference problems will SRI produce an optimal reward model? In this section, we state the theoretical answer to these questions: in the data limit, SRI approaches Bayes optimality for \textit{any} reward inference problem as long as the problem and SRI's model family satisfy certain compactness and niceness assumptions. (SRI requires no information about the \textit{test-time} task besides behavior trajectories, which are assumed to be drawn from the same distribution as the training trajectories.)
Concretely, in Appendix \ref{app:theory}, using the Bayesian inverse reinforcement learning framework \citep{Ramachandran2007bayesian}, we formally state and prove the following theorem.

\begin{namedthm*}{Main Theorem, Paraphrased}
[Asymptotic Optimality of SRI Algorithms]
\label{thm:policy_convergence_main}
Consider an SRI problem in an MDP with jointly distributed reward $R$ and trajectory samples $T \coloneqq \{T_n\}_{n=1}^N$, with marginal distribution $P_T$ over $T$. Suppose that the following assumptions hold.

\textit{Assumptions:} \textbf{1)} The MDP has compact state and action spaces $\mathcal{S}$ and $\mathcal{A}$ and bounded returns (with either $\gamma<1$ or finite-horizon truncation); \textbf{2)} The MDP has a random (unknown) continuous reward function $R$;
\textbf{3)} SRI's model family is $\{f_\theta\}_{\theta \in \Theta}$ for compact $\Theta$;
\textbf{4)} $\{f_\theta\}$ is equicontinuous;
\textbf{5)} SRI globally minimizes its mean-squared-error loss, and $\{f_\theta\}$ contains a continuous version of the posterior-mean regression function on $\mathcal S\times\mathcal X$, where $\mathcal X\coloneq \supp(P_T)$;
\textbf{6)} We sample trajectories and rewards from the true distribution, and states from a distribution with full support on $\mathcal{S}$.

\textbf{Claim:} Any SRI algorithm in this setting approaches asymptotic Bayes-optimality in two senses as the dataset grows: first, its inferred reward functions almost surely converge uniformly on \(\mathcal S\times\mathcal X\) to the expectation of the posterior distribution of \(R\) given \(T\); second, the returns of its optimal policies converge almost surely to the maximum expected return under this posterior distribution.

\end{namedthm*}

\begin{proof}[Proof sketch]
The proof outlines three main steps:
1) Uniform Reward Convergence: SRI's inferred reward converges uniformly to the expected posterior reward.
2) Optimality of the Mean: Maximizing return under this mean is equivalent to maximizing expected return under the posterior.
3) Policy Convergence: Uniform reward convergence implies that the optimal policies for SRI's reward functions achieve optimal performance. Note that all convergence is almost sure.

Step 1: \textbf{Reward Convergence}. First, pointwise convergence is shown. Boundedness (of rewards and function family) implies a Glivenko-Cantelli loss, ensuring uniform convergence across \textit{parameters} to an optimal $\theta$. Then, equicontinuity of $\{f_\theta\}$ and the Arzel\`a-Ascoli theorem provide uniform convergence across \textit{states and on-support trajectory inputs} for a reward function subsequence. This extends to the full sequence, proving uniform parameter-wise and input-wise convergence to the expected posterior reward.

Step 2: \textbf{Optimality of the Mean}. We generalize a result by \citet{Ramachandran2007bayesian}. Using Fubini's Theorem and Bounded Convergence, we show maximizing return under the expected posterior reward equals maximizing expected return under the full posterior.

Step 3: \textbf{Policy Convergence}. Bounded MDP returns and uniform reward convergence (from Step 1) allow bounding the deviation of SRI's predicted returns across policies. This return deviation shrinks with more data. Therefore, SRI's optimal policies approach optimality with respect to the expected posterior reward, achieving expected optimality under the full reward posterior.
\end{proof}

\section{Experiments}
\label{sec:experiments}

In the previous section, we showed that, theoretically, an ideal SRI algorithm can solve reward inference as well as it is possible to solve it given the available behavior.
In this section, we evaluate that claim in two complementary settings.
First, in Meta-World, we test whether trajectory-input SRI can infer useful rewards in continuous-control robotic manipulation tasks from highly suboptimal behavior -- including gestures that only indirectly indicate goals -- and how its performance varies with demonstration quality and data quantity.
Second, on the tabular gridworld benchmark of \citet{shah2019feasibility}, we compare SRI directly against the closest prior arbitrary-behavior reward-inference methods in the setting for which they were originally designed.
Code is \href{https://github.com/willschwarzer/supervised-reward-inference}{publicly available}.


\subsection{Meta-World Experiment Design}
\label{sec:tasks}

\textbf{Tasks.} We use Meta-World reach and pick-place tasks \citep{yu2019metaworld} with randomly distributed goals. Pick-place is substantially harder: it requires SRI to infer precise shaping rewards for grasping in addition to a large, discontinuous success reward (see Appendix~\ref{app:experimentaldetails} for full details).

\textbf{Demonstrations.} We designed five demonstration classes to test SRI under qualitatively different suboptimalities. \textsc{Noisy}$_\varepsilon$ demonstrations test graceful degradation: the end-effector reaches toward the goal but takes a random action with probability $\varepsilon$ each timestep. \textsc{Psychic}$_\alpha$ demonstrations test systematically \textit{wrong} behavior: the end-effector reaches deterministically toward an incorrect position offset from the goal ($\alpha\!=\!1.0$: no offset; $\alpha\!=\!-1.0$: mirrored through the origin). \textsc{Hard} demonstrations combine both challenges: the end-effector circles around an incorrect position. \textsc{Gesture} demonstrations test non-demonstration behavior on pick-place tasks: the end-effector merely reaches toward the goal without ever picking or placing. See Appendix~\ref{app:demonstrationdetails} for exact specifications.

\textbf{Data and training.} Except where otherwise noted, SRI received 1,280 tasks, each with 100 demonstrations and 10,000 randomly sampled state-reward pairs (see Tables~\ref{tab:reachdata} and~\ref{tab:ppdata} for data efficiency results). Policies were trained for 5 million (reach) or 10 million (pick-place) environment steps with TQC \citep{kuznetsov2020controllingoverestimationbiaswithtruncatedmixtureofcontinuousdistributionalquantilecritics} for reach and PPO \citep{schulman2017proximalpolicyoptimizationalgorithms} for pick-place, via Stable-Baselines3 \citep{stable-baselines3}. See Appendix~\ref{app:sritrainingdetails} for SRI and RL hyperparameters.

\textbf{Metrics and baselines.}
\label{sec:metrics}
Performance is measured by average normalized goal proximity ($\uparrow$): 1 minus the distance to the goal, scaled so the initial distance is 1. All experiments use 30 trials with standard error bars. We compare against three single-task imitation learning baselines -- BC \citep{pomerleau1988alvinn}, GAIL \citep{ho2016generative}, and AIRL \citep{fu2018learning} -- and against policies trained with ground-truth rewards (GT-RL). See Appendix~\ref{app:metrics} for additional details.

\subsection{Meta-World Results}
\label{sec:results}


\textbf{Reward inference from gestures.} On pick-place tasks, SRI inferred reward functions from reach-toward-goal gestures that never actually picked or placed anything, achieving $0.822 \pm 0.051$ normalized goal proximity\label{tab:pickplace} (GT-RL: $0.903 \pm 0.011$; all imitation learning baselines: $\leq 0$; 30 trials).

\begin{wrapfigure}[23]{R}{0.45\textwidth}
    \centering
    \includegraphics[width=0.44\textwidth]{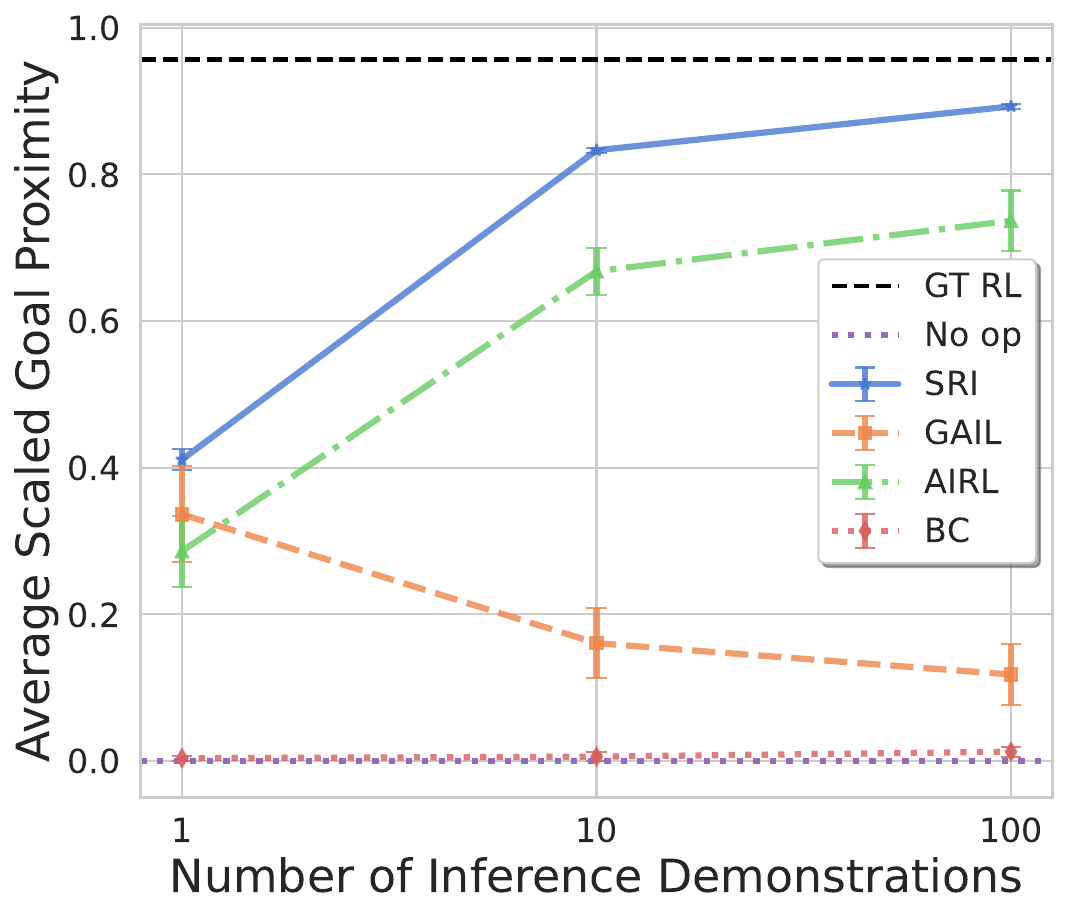}
    \caption{\textbf{Demonstration efficiency.} Mean normalized goal proximity of SRI and imitation baselines on Meta‑World reach tasks using \textsc{noisy}$_{0.87}$ demonstrations, plotted versus the number of demos per task. Error bars show $\pm$ standard error over 30 trials (Sec.~\ref{sec:metrics}). SRI's training makes it demonstration-efficient.}
    \label{fig:reach_dem_efficiency}
\end{wrapfigure}

\textbf{Robustness to suboptimal demonstrations.} More broadly, SRI learned accurate reward functions whenever the demonstrations identified the goal, including in a wide variety of situations where optimality-assuming methods completely fail (Figure \ref{fig:offset_reach}; Tables \ref{tab:reachdata}, \ref{tab:ppdata}). It performed well even when the behavior-reward mapping was profoundly noisy (Figure \ref{fig:noisy_reach}), and even when it only received a single noisy demonstration (Figure \ref{fig:reach_dem_efficiency}); however, it performed substantially better when the behavior-reward mapping was arbitrarily suboptimal but still invertible (Figure \ref{fig:offset_reach}). Finally, while challenging reward functions still require a substantial number of labeled observations to be learned accurately (Table \ref{tab:ppdata}), potentially necessitating the use of self-supervised learning methods, simpler tasks may be solvable with quantities of data small enough to collect manually (Table \ref{tab:reachdata}).

\begin{figure}[t]
  \centering
  \begin{subfigure}{.48\linewidth}
      \includegraphics[width=\linewidth]{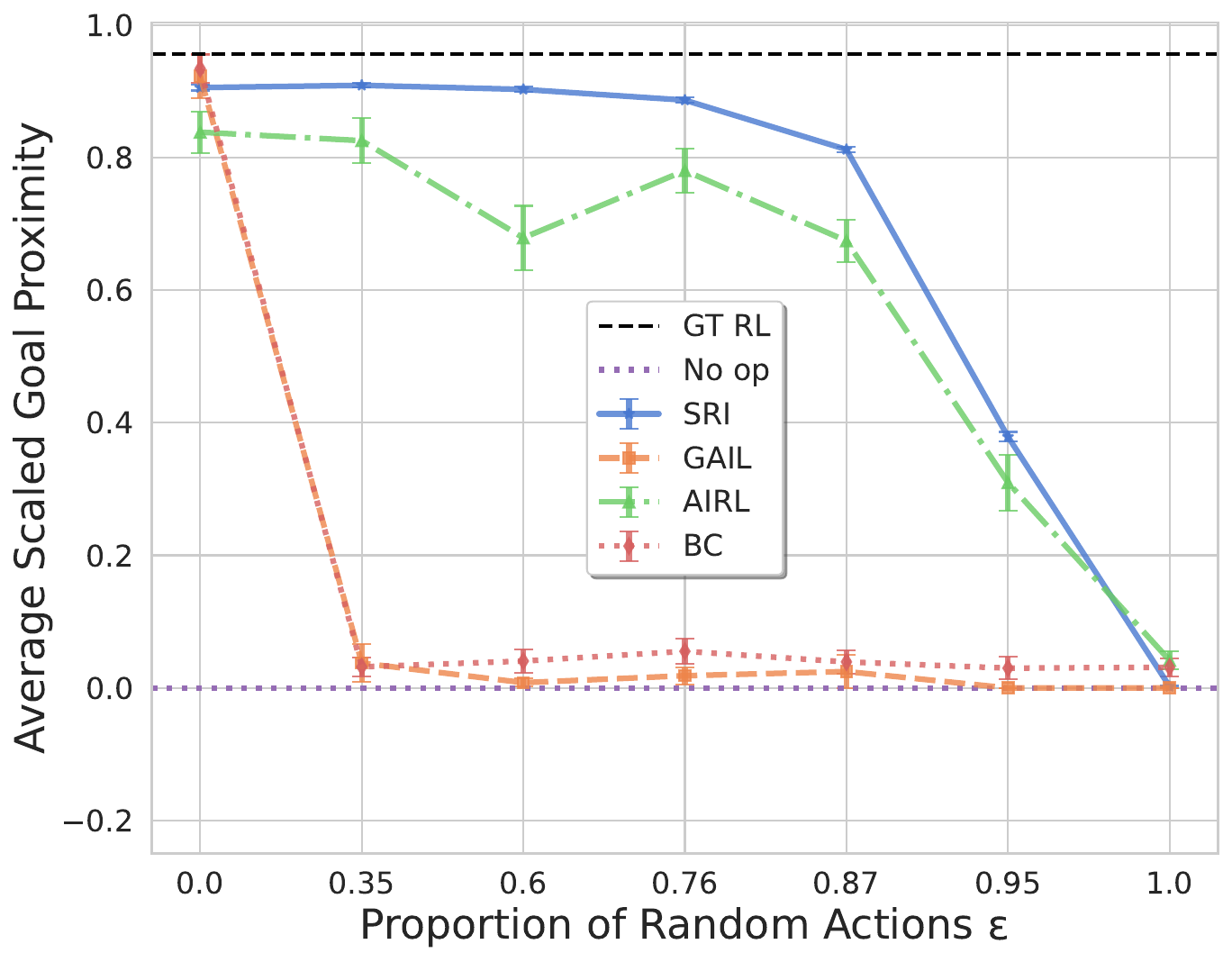}%
      \caption{\textbf{Random actions}: \textsc{noisy}$_\varepsilon$}
      \label{fig:noisy_reach}
    \end{subfigure}
  \hfill
  \begin{subfigure}{.48\linewidth}
      \includegraphics[width=\linewidth]{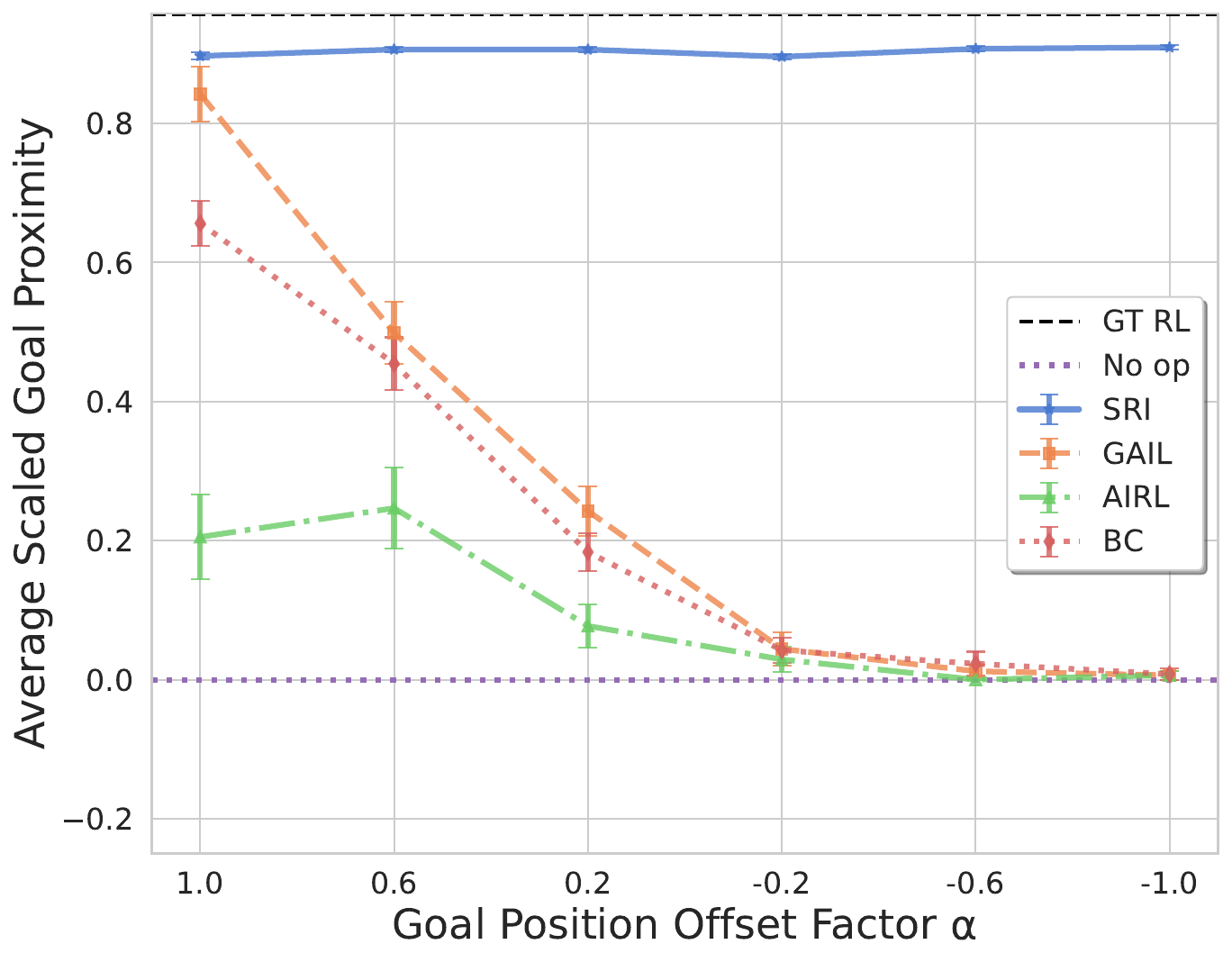}%
      \caption{\textbf{Reach to offset goal}: \textsc{psychic}$_\alpha$}
      \label{fig:offset_reach}
      \end{subfigure}
  \caption{\textbf{Robustness of SRI to suboptimal demonstrations in reach tasks.}
           Error bars: standard error over 30 trials. SRI infers near-optimal policies given \textit{invertible} suboptimalities (\textbf{b}).}
  \label{fig:reach_robustness}
\end{figure}


\begin{table}[b]
  \centering
  \footnotesize
  \caption{Data efficiency of SRI on two Meta-World tasks (\textsc{hard} demos).
   Each cell reports mean normalized goal proximity ($\uparrow$) $\pm$ standard error over 30 trials.
   All imitation learning baselines (GAIL, AIRL, BC) achieve $\leq 0$ on both tasks with \textsc{hard} demos; GT-RL achieves $0.96\!\pm\!0.01$ / $0.90\!\pm\!0.01$ (reach / pick-place). See Table~\ref{tab:baselines-hard} in the appendix for full baseline results.}
  \label{tab:sri-data-efficiency}
  \begin{subtable}[t]{0.48\linewidth}
    \centering
    \subcaption{Reach}
    \label{tab:reachdata}
    \begin{tabular}{lccc}
      \toprule
      \multirow{2}{*}{\makecell[b]{\# Tasks}}
         & \multicolumn{3}{c}{\# Observations / Task} \\
      \cmidrule(lr){2-4}
         & 100 & 1k & 10k \\
      \midrule
      1280 & $0.86\!\pm\!0.01$ & $0.92\!\pm\!0.00$ & $0.93\!\pm\!0.00$ \\
      320  & $0.78\!\pm\!0.01$ & $0.87\!\pm\!0.01$ & $0.89\!\pm\!0.01$ \\
      80   & $0.50\!\pm\!0.02$ & $0.79\!\pm\!0.01$ & $0.81\!\pm\!0.01$ \\
      \bottomrule
    \end{tabular}
  \end{subtable}
  \hfill
  \begin{subtable}[t]{0.48\linewidth}
    \centering
    \subcaption{Pick-place}
    \label{tab:ppdata}
    \begin{tabular}{lccc}
      \toprule
      \multirow{2}{*}{\makecell[b]{\# Tasks}}
         & \multicolumn{3}{c}{\# Observations / Task} \\
      \cmidrule(lr){2-4}
         & 100 & 1k & 10k \\
      \midrule
      1280 & $0.61\!\pm\!0.08$ & $0.72\!\pm\!0.07$ & $0.78\!\pm\!0.05$ \\
      320  & $0.22\!\pm\!0.07$ & $0.77\!\pm\!0.05$ & $0.71\!\pm\!0.06$ \\
      80   & $0.02\!\pm\!0.02$ & $0.27\!\pm\!0.07$ & $0.50\!\pm\!0.09$ \\
      \bottomrule
    \end{tabular}
  \end{subtable}
\end{table}



\textbf{Failure modes.} SRI's failures were largely data failures: when the state-reward dataset did not cover all regions that a policy might explore, SRI sometimes inferred hackable reward functions with incorrect local optima. Pick-place tasks were particularly sensitive, as fitting precise shaping rewards alongside a large, discontinuous success reward strained both data and model capacity (Table \ref{tab:ppdata}). These failures are also hard to regularize away: unlike in imitation learning, the inferred policy and reward cannot be regularized toward the demonstrations themselves, though regularizing toward states covered by the state-reward dataset or collecting state-reward labels iteratively on RL-visited states may be a viable substitute.

\subsection{Shah et al.\ gridworld benchmark}
\label{sec:shah_gridworld_results}

To compare SRI against the closest prior method for reward inference from arbitrary suboptimal behavior, we also evaluate on the tabular gridworld benchmark of \citet{shah2019feasibility}; see Figure~\ref{fig:shah_gridworld}.
This benchmark differs substantially from our Meta-World experiments.
Most importantly, it exposes a full demonstrator policy on a tabular domain, whereas our main SRI setting conditions on a set of behavior trajectories and predicts rewards at queried states. We therefore use a policy-input variant of SRI that preserves the same supervised inverse-mapping perspective while matching the benchmark interface.
Rather than first learning an explicit differentiable planner and then inverting it by gradient descent \citep{shah2019feasibility}, this variant maps the observed policy directly to the underlying reward map (see
Appendix~\ref{app:shah_gridworld} for details).

SRI performs well in this setting.
As shown in Figure~\ref{fig:shah_gridworld}, it achieves near-ceiling performance with every class of suboptimal demonstrator. We conjecture that SRI's advantage comes from avoiding the planner-learning bottleneck:
their Algorithm~1 must approximate the demonstrator with a value iteration network and then solve a separate inverse problem through that learned planner, while SRI instead uses the same supervision to learn the inverse map directly. Although this abandons any RL-specific inductive biases, it is possible that even $5000$ policy--reward pairs may be enough to favor tabula rasa machine learning methods \citep{sutton2019bitter}. Taken together with our Meta-World results, these results show that SRI can infer rewards from arbitrary fixed behavior classes as accurately as the behavior allows, both in continuous-control settings beyond the scope of prior methods and on the tabular benchmark where those methods were originally introduced.

\begin{figure}[t]
\centering
\includegraphics[width=\linewidth]{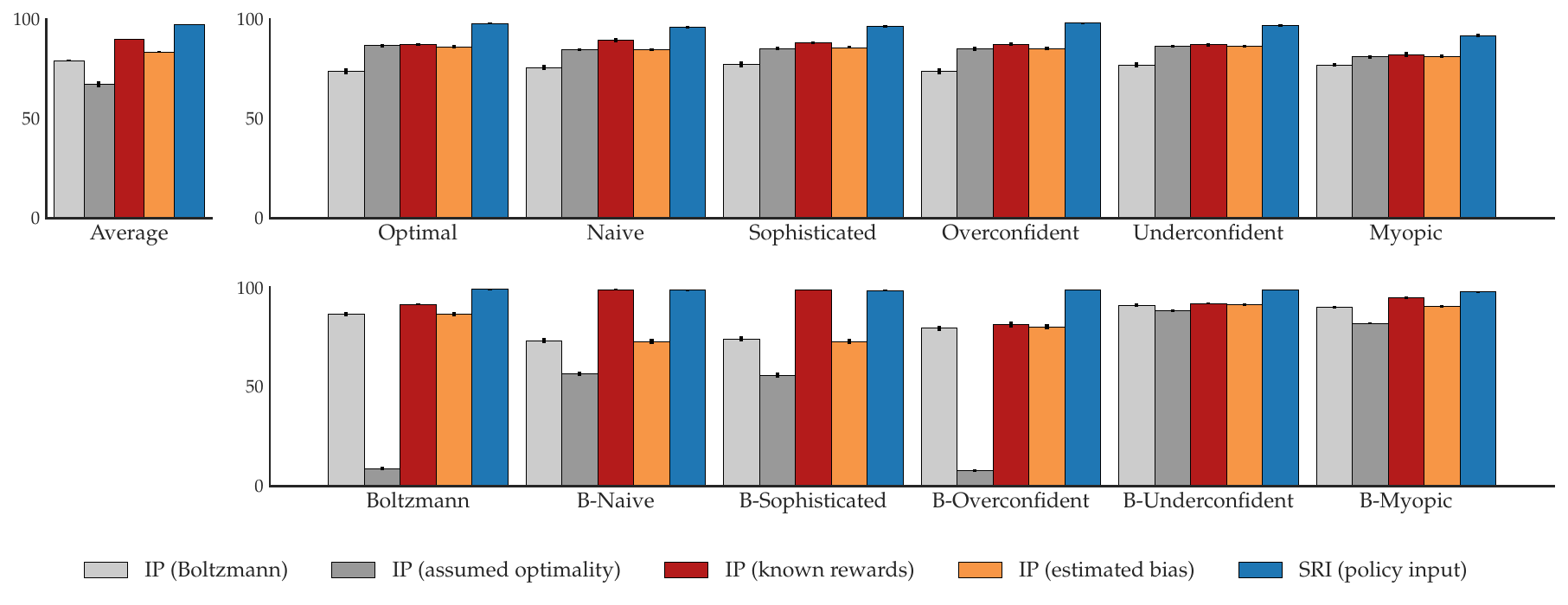}
\caption{\textbf{Comparison on the Shah et al.~(2019) gridworld benchmark.}
Mean percent reward recovered ($\uparrow$) over 30 trials for six deterministic demonstrator classes and their Boltzmann-noisy counterparts, using the evaluation protocol of \citet{shah2019feasibility}: each bar reports the discounted true return obtained by planning optimally with the inferred reward, normalized by the return obtained when planning with the ground-truth reward.
The four inverse-planning (IP) baselines are from \citet{shah2019feasibility}: IP~(Boltzmann) and IP~(assumed optimality) assume a fixed behavior model; IP~(known rewards) is their Algorithm~1, which learns a differentiable planner from reward--policy pairs; and IP~(estimated bias) jointly estimates the planner and a bias term.
SRI~(policy input) is our method, adapted to condition on a full demonstrator policy rather than a set of trajectories and to regress directly to the tabular reward map. On this set of seeds,
SRI attains the strongest overall average and matches or exceeds Algorithm~1 in every condition, suggesting that directly learning the policy-to-reward inverse map can outperform planner-learning-plus-inversion even in the tabular setting for which the latter was originally proposed.
Error bars show $\pm$ standard error.}
\label{fig:shah_gridworld}
\end{figure}

\section{Conclusion}
\label{sec:conclusion}
In this paper, we introduced supervised reward inference (SRI), a framework that reduces reward inference from arbitrary (but fixed-distribution) behavior to supervised learning on known rewards. We showed that SRI is asymptotically Bayes-optimal, as long as the model is strong enough: as the quantity of training data available to it approaches infinity, its learned policies approach the highest ground-truth return possible given limited behavior at inference time. Finally, we showed that SRI infers difficult reward functions from behavior with complex suboptimalities in continuous-control tasks \citep{yu2019metaworld} as accurately as that behavior allows, while also performing strongly on the tabular benchmark where prior arbitrary-behavior reward-inference methods were originally evaluated \citep{shah2019feasibility}.


Several directions remain for future work. Our experiments assume task-indexed reward supervision from known simulator rewards; extending SRI to settings where rewards must be estimated -- from sparse annotations \citep{Knox2009tamer}, preferences \citep{christiano2017deep}, facial expressions \citep{cui2020empathic}, or other signals -- is an important open problem. Visual domains will require an image encoder, and SRI's data efficiency may benefit from self-supervised pretraining with visual \citep{oquab2024dino} or behavioral \citep{pirotta2024fast} foundation models. Orthogonally, RLZero \citep{sikchi2025rlzero} highlights a complementary scaling path: SRI treats arbitrary behavior as supervision for reward inference, whereas RLZero bypasses rewards and tackles direct policy inference from language without in-domain supervision.

More broadly, SRI reframes reward inference from arbitrary behavior as a data-and-supervision problem rather than a demonstrator-modeling problem. That shift may allow future methods to leverage the same scaling trends that have driven progress elsewhere in machine learning \citep{brown2020language,oquab2024dino}.

\section*{Acknowledgments}
We are grateful to Bruno Castro da Silva for his exceptionally generous help in pushing this project to completion. We are also grateful to Russell Coleman for his early work on this project.

This work has taken place in the Safe, Correct, and Aligned Learning and Robotics Lab (SCALAR) at The University of Massachusetts Amherst. SCALAR research is supported in part by the NSF (IIS-2437426) and Open Philanthropy. This work was also supported by the NSF under Grant No. CCF-2018372.

Scott Niekum holds concurrent appointments as an Associate Professor at the University of Massachusetts Amherst and as an Amazon Scholar. This paper describes work performed at the University of Massachusetts Amherst and is not associated with Amazon.

\bibliography{mybib}
\bibliographystyle{rlj}

\beginSupplementaryMaterials
\section{Proof of the Bayes Optimality of SRI}
\label{app:theory}

In this section, we use the Bayesian inverse reinforcement learning framework \citep{Ramachandran2007bayesian} to show that any ``ideal'' (empirical-risk-minimizing) SRI algorithm is asymptotically optimal for both reward inference and imitation learning (see the Main Theorem below for details). In particular, our analysis relies on only three main assumptions for the SRI algorithm and the problem: niceness and compactness of the function class and MDP, appropriate model capacity, and data coverage.

Our proof proceeds as follows: first, in Section \ref{sec:prelim}, we lay out our notation and assumptions; second, in Section \ref{sec:rbar-optim}, we derive the closed form of the Bayes-optimal reward function for imitation given limited behavior trajectories; third, in Section \ref{sec:sup-convergence}, we show that SRI converges uniformly in the limit of infinite data to this Bayes-optimal reward function; finally, in Section \ref{sec:policy-convergence}, we use these results to prove that the optimal policies for SRI also converge uniformly to Bayes optimality, i.e., they asymptotically provide the maximum possible expected return given irreducible uncertainty about the ground truth reward function.

\subsection{Preliminaries}
\label{sec:prelim}
\textbf{Notation:} Each of the sets we consider generally has at most one $\sigma$-algebra associated with it. Thus, as standard in probability theory, we will often use each set interchangeably with its measurable space.

Let $\mathcal{R}$ be a measurable space of reward functions in $\mathcal{M}$, equipped with any $\sigma$-algebra, and let $\mathcal{P}(\mathcal{R})$ be the set of all probability measures with respect to $\mathcal{R}$'s $\sigma$-algebra. (Define the operator $\mathcal{P}$ similarly for any measurable space.) Let $P \in \mathcal{P}(\mathcal{R})$ be the ground-truth marginal distribution of reward functions; in particular, these reward functions are potentially co-dependent with behavior trajectories of maximum length $L_B$. 
For any number of trajectories $N$ define the task-generating distribution 
\[\mathcal{D}_R \in \mathcal{P}\left(\mathcal{R} \times \mathcal{T}^N\right).\]

\paragraph{Effective trajectory-input space.}
Let \(T \coloneqq \{\tau_n\}_{n=1}^N\) denote the random trajectory-set input, where \((R,T)\sim\mathcal D_R\), and let \(P_T\) be the marginal distribution of \(T\) under \(\mathcal D_R\). Define
\[
\mathcal X \coloneqq \supp(P_T)\subseteq \mathcal T^N.
\]
Since \(\mathcal T^N\) is compact by Assumption \ref{ass:well-behaved}(a), \(\mathcal X\) is closed and thus compact. All uniform (supremum) statements over trajectory inputs below are taken over \(\mathcal X\).

Rather than seeing the actual reward function, an SRI algorithm sees a set of $M$ state-reward pairs. In particular, let $P_S \in \mathcal{P}(\mathcal{S})$ be a distribution over states. The data-generating distribution is defined by the following process: first, for dataset size (number of tasks) $K$, take $K$ i.i.d. samples 
\[\left\{\left(R_k, \{\tau_{k, n}\}_{n=1}^N\right)\right\}_{k=1}^K\] 
from $\mathcal{D}_R$, then sample $M$ states from $P_S$ for each task: \[\left\{s_{k, m}\right\}_{k=1, m=1}^{K, M}.\] 
Finally, label each state $s_{k, m}$ with its reward for SRI to predict, $R_k(s_{k, m})$. The resulting dataset has $K$ tasks, and each task has $N$ trajectories and $M$ state-reward pairs: 
\[\left\{\left(\left\{\left(s_{k, m}, R_k(s_{k,m})\right)\right\}_{m=1}^M, \left\{\tau_{k,n}\right\}_{n=1}^N\right)\right\}_{k=1}^K.\] 

We aim to learn a parameterized reward function $R_\theta: \mathcal{S} \times \mathcal{T}^N \rightarrow \mathbb{R}$ on this dataset, for $\theta$ in some space $\Theta$ (see Assumption \ref{ass:well-behaved}).

\paragraph{Notation} For brevity, we henceforth omit index specifications when they are clear from context; e.g., $\{\tau_n\} \vcentcolon = \{\tau_n\}_{n=1}^N$, and similar for sequences. Furthermore, in conditional statements we omit the conditioned random variable when it is clear from context: $\mathbb{E}[Y | x] \vcentcolon = \mathbb{E}[Y | X=x]$.

We first lay out all assumptions necessary for our results.

\begin{ass}[Well-Behaved Spaces and Functions]
\label{ass:well-behaved}
\leavevmode
    \textbf{(a)} The state space \( \mathcal{S} \), observation space \( \mathcal{O}\), and action space \( \mathcal{A} \) are compact measurable metric spaces equipped with $\sigma$-algebras \( \mathcal{F}_\mathcal{S} \), \(\mathcal{F}_\mathcal{O}\) and \( \mathcal{F}_\mathcal{A} \), respectively, and all relevant probability measures (e.g., $\pi(s)$ for any $s \in \mathcal{S}$) are defined with respect to $\mathcal{F}_\mathcal{S}$, \(\mathcal{F}_\mathcal{O}\), and $\mathcal{F}_\mathcal{A}$. (It follows that $\mathcal{T}^N$ is also a compact metric space, using some reasonable product metric.)
    \textbf{(b)} The transition probability function $p: \mathcal{S} \times \mathcal{A} \rightarrow \mathcal{P}(\mathcal{S})$ is a Markov kernel (hence measurable), all policies $\pi: \mathcal{S} \rightarrow \mathcal{P}(\mathcal{A})$ are Markov kernels, and the reward function $R: \mathcal{S} \rightarrow \mathbb{R}$ is measurable. 
    \textbf{(c)} The space $\Theta$ of reward model parameters is compact.
    \textbf{(d)} $R_\theta(s, \{\tau_n\})$ is measurable and continuous with respect to $\theta$, $s$, and $\{\tau_n\}$, and $\left\{R_\theta\right\}_{\theta \in \Theta}$ is equicontinuous in $s$ and $\{\tau_n\}$. 
    \textbf{(e)} All measure spaces are $\sigma$-finite.
\end{ass}

\begin{remark}
\label{rem:boundedtheta}
    Compactness of $\Theta$ and $\mathcal{S}$ and equicontinuity (Assumption 1(a, c, d)) imply that $\{R_\theta\}_{\theta \in \Theta}$ is uniformly bounded.
\end{remark}

\begin{ass}[Bounded Returns]
\label{ass:bounded}
    All reward functions $R \in \mathcal{R}$ are bounded, and either $\gamma < 1$ or $\exists L \in \mathbb{Z}_{\geq 0}$ such that for all $t > L$, $R_t = 0$. 
    Therefore, all returns are bounded, and $J_\theta$ is bounded for all $\theta$.
\end{ass}

\begin{ass}[MSE]
\label{ass:mse}
    In this section, SRI is defined using squared error over the dataset. In particular, we define the single-sample loss for a parameter \(\theta\) as
\[
\ell_\theta(R,\{\tau_n\},s) = (R_\theta(s,\{\tau_n\}) - R(s))^2.
\]
\end{ass}

\begin{remark}
\label{rem:nouniquetheta}
    We do not assume the uniqueness of minimizing parameters, but our definitions naturally imply unique minimizing functions.
\end{remark}

\begin{ass}[Model Capacity]
\label{ass:modelcapacity}
    The hypothesis class \(\{R_\theta\}\) contains a continuous version of the posterior-mean regression target on \(\mathcal S\times\mathcal X\). That is, there exists \(\theta^\dagger\in\Theta\) such that, defining
    \[
    \bar{R}_|(s,t)\coloneqq R_{\theta^\dagger}(s,t),\qquad (s,t)\in\mathcal S\times\mathcal X,
    \]
    we have
    \[
    R_{\theta^\dagger}(S,T)=\mathbb E[R(S)\mid T]\quad\text{a.s. under }P_S\times P_T.
    \]
\end{ass}

\begin{ass}[Data Coverage]
\label{ass:datacoverage}
    The data distribution over states has full support on $\mathcal{S}$: $\supp(P_S) = \mathcal{S}.$ (If the MDP contains unreachable states for any reason, the support need not include those states.)
\end{ass}

\subsection{Optimality of $\bar{R}$ as a reward function for imitation}
\label{sec:rbar-optim}

We first adapt a tabular result by \citet{Ramachandran2007bayesian} to show that, when optimizing expected return under an uncertain reward function, it is always optimal to optimize return under the expected value of that reward function.

\begin{lemma}
\label{lem:linearity}
 Let \( M_R = (\mathcal{S}, \mathcal{A}, p, R, d_0, \gamma) \) be an MDP with random reward function $R$ following any distribution \( P_R \in \mathcal{P}(\mathcal{R})\). Define the expected reward function \( \bar{R}: \mathcal{S} \rightarrow \mathbb{R} \) to be \( \bar{R}(s) = \mathbb{E}[R(s)] \). Then, the policy \( \pi^* \) that maximizes the expected cumulative reward \( \mathbb{E}_R[J_R(\pi)] \), where
\[
J_R(\pi) = \mathbb{E}\left[ \sum_{t=0}^\infty \gamma^t R(S_t) \,; \pi \right],
\]
is an optimal policy for the MDP \( M_{\bar{R}} = (\mathcal{S}, \mathcal{A}, p, \bar{R}, d_0, \gamma) \) with reward function \( \bar{R} \). Similarly, any optimal policy for $M_{\bar{R}}$ is also optimal under $J_R$.
\end{lemma}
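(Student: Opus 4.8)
The plan is to prove the stronger pointwise identity $\mathbb{E}_R[J_R(\pi)] = J_{\bar R}(\pi)$ for \emph{every} policy $\pi$, where $J_{\bar R}(\pi)$ denotes the return of $\pi$ in the MDP $M_{\bar R}$. Once the two objectives coincide as functions of $\pi$, they trivially share the same set of maximizers, which gives both directions of the claim at once: any $\pi^*$ maximizing $\mathbb{E}_R[J_R(\pi)]$ is optimal for $M_{\bar R}$, and conversely. The heart of the argument is an interchange of the expectation over the random reward $R \sim P_R$ with (i) the expectation over the episode induced by $\pi$ and (ii) the infinite discounted sum.

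First I would expand the objective into the nested expectation $\mathbb{E}_R\big[\mathbb{E}_\tau[\sum_{t=0}^\infty \gamma^t R(S_t)]\big]$, where $\mathbb{E}_\tau[\,\cdot\,] \vcentcolon= \mathbb{E}[\,\cdot\,;\pi]$ is the expectation over the episode $(S_0, A_0, S_1, \dots)$ generated by $\pi$ together with $d_0$ and $p$. The crucial structural observation is that the law of this episode depends only on the fixed dynamics $(d_0, p)$ and on $\pi$, so it is independent of the draw of $R$; this independence is what ultimately lets me replace an expected reward along the trajectory with $\bar R$.

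Next I would swap the order of the two expectations using Fubini's theorem, whose integrability hypothesis is supplied by Assumption \ref{ass:bounded} (bounded returns) together with the measurability and $\sigma$-finiteness guaranteed by Assumption \ref{ass:well-behaved}. After swapping, I would pull $\mathbb{E}_R$ inside the discounted sum via the bounded (dominated) convergence theorem, again invoking boundedness of the rewards and either $\gamma < 1$ or the finite effective horizon of Assumption \ref{ass:bounded} to dominate the partial sums. This produces $\mathbb{E}_\tau\big[\sum_{t=0}^\infty \gamma^t \,\mathbb{E}_R[R(S_t)]\big]$. Using the independence noted above, for each realized state $S_t = s$ we have $\mathbb{E}_R[R(s)] = \bar R(s)$, so the bracketed quantity collapses to $\mathbb{E}_\tau[\sum_{t=0}^\infty \gamma^t \bar R(S_t)] = J_{\bar R}(\pi)$. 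I would also briefly check that $\bar R$ is itself measurable and bounded, so that $M_{\bar R}$ is a well-defined MDP and $J_{\bar R}$ exists.

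The main obstacle is bookkeeping rather than conceptual: rigorously verifying the hypotheses that license the two interchanges. Concretely, I need the map $(R,\tau)\mapsto \sum_t \gamma^t R(S_t)$ to be jointly measurable and dominated by an integrable function. Boundedness of the rewards makes the dominating function a constant over the (bounded, discounted or finite-horizon) return, so Assumption \ref{ass:bounded} is exactly the hypothesis that closes this gap; the rest of the proof is linearity of expectation plus the independence of the evaluation episode from $R$.
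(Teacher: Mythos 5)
Your proposal is correct and follows essentially the same route as the paper's proof: both establish the pointwise identity $\mathbb{E}_R[J_R(\pi)] = J_{\bar R}(\pi)$ for every $\pi$ by first interchanging the expectation over $R$ with the expectation over the episode via Fubini's theorem (justified by the product-measure structure, measurability, and the boundedness in Assumption \ref{ass:bounded}), and then pulling $\mathbb{E}_R$ inside the discounted sum via the bounded convergence theorem. Your added remark that $\bar R$ itself must be checked to be measurable and bounded is a small point the paper leaves implicit, but otherwise the two arguments coincide.
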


\begin{proof}
We aim to show that
\begin{equation}\label{eq:main-equality}
\mathbb{E}\left[ J_R(\pi) \right] = \mathbb{E}\left[ \sum_{t=0}^\infty \gamma^t \bar{R}(S_t) \,; \pi \right],
\end{equation}
where the left expectation is taken over both the randomness in \( R \) and the stochastic transitions in the MDP under policy \( \pi \), while the right expectation is taken over the latter alone.

The proof establishes the following equalities:
\begin{align}
\mathbb{E}_R\left[ J_R(\pi) \right] &= \mathbb{E}_R \left[ \mathbb{E}\left[ \sum_{t=0}^\infty \gamma^t R(S_t) \,; \pi \right] \right] \label{eq1}  \\
&= \mathbb{E}\left[ \mathbb{E}_R\left[ \sum_{t=0}^\infty \gamma^t R(S_t) \right] \,; \pi \right] \label{eq2} \\
&= \mathbb{E}\left[ \sum_{t=0}^\infty \gamma^t \mathbb{E}_R\left[ R(S_t) \right] \,; \pi \right]. \label{eq3}
\end{align}
Step $\eqref{eq2} \rightarrow \eqref{eq3}$ follows immediately from Assumption \ref{ass:bounded} and the Bounded Convergence Theorem. Therefore, all that remains is to justify the application of Fubini's Theorem in step $\eqref{eq1} \rightarrow \eqref{eq2}$.

Let \( (\Omega, \mathcal{F}, \mu) \) be the product measure space of reward functions and trajectories, where \( \Omega = \mathcal{R} \times \mathcal{S}^\infty \), \( \mathcal{F} = \mathcal{R} \otimes \mathcal{S}^\infty \) (using the usual cylinder $\sigma$-algebra), and \( \mu = P(R) \times \mathbb{P}_\pi \). Here, \( \mathbb{P}_\pi \) is the probability measure over trajectories induced by \( \pi \) in this MDP.

Define the function \( f(R, \{ S_t \}) = \sum_{t=0}^\infty \gamma^t R(S_t) \).

To apply Fubini's Theorem, we need to verify that
\( f \) is measurable with respect to \( \mathcal{F} \) 
and that 
\( \int_\Omega |f| \, d\mu < \infty \).

\textbf{Measurability:} First, note that the projection of $\Omega$ onto each individual $S_t$ is measurable by definition of the cylinder $\sigma$-algebra; thus, because $R$ is measurable by assumption, each summand of $f$ is individually measurable. The function \(f\) is therefore the limit of finite sums of measurable functions, and hence is measurable.

\textbf{Integrability:} For some non-negative $c \in \mathbb{R}$, whose exact value depends on $\gamma$ and $L$ (if applicable) we have
\begin{align*}
\int_\Omega |f| \, d\mu &= \int_{\mathcal{R}} \int_{S^\infty} \left| \sum_{t=0}^\infty \gamma^t R(S_t) \right| \, d\mathbb{P}_\pi(\{ S_t \}) \, dP(R) \\
&= \mathbb{E}_{R, S_t}\left[ \left| \sum_{t=0}^\infty \gamma^t R(S_t) \right| \right] \\
&\leq \mathbb{E}_{R, S_t}\left[ c \right],
\end{align*}

by Assumption \ref{ass:bounded}.

Thus, we have established \eqref{eq:main-equality}. To conclude, since the expected cumulative reward under the distribution \( P(R) \) equals the expected cumulative reward in the MDP with reward function \( \bar{R} \), maximizing \( \mathbb{E}[J_R(\pi)] \) over policies \( \pi \) is equivalent to maximizing \( J_{\bar{R}}(\pi) \) in the MDP \( M' = (S, A, p, \gamma, \bar{R}) \).

Therefore, the policy \( \pi^* \) that maximizes \( \mathbb{E}[J_R(\pi)] \) is the optimal policy for the MDP with reward function \( \bar{R} \).
\end{proof}

Equipped with the result of Lemma \ref{lem:linearity}, we now show that, under our assumptions, any SRI algorithm asymptotically produces optimal policies for $\bar{R}$, and thus optimal policies for the posterior distribution over rewards, $P_{R|\{\tau_n\}}$. 

\subsection{Convergence of $(R_{\theta_K})$}
\label{sec:sup-convergence}

The first step is to demonstrate the convergence of the learned SRI model in the limit of infinite data. First, we define our risk functions in a manner that allows us to invoke a Glivenko-Cantelli argument \citep{Sen2022gentle}. As discussed in the preliminaries, consider a single sample \((R,\{\tau_n\}, s)\) drawn according to \(\mathcal{D}_R \times P_S\), where \((R,\{\tau_n\}) \sim \mathcal{D}_R\) and \(s \sim P_S\). 
The population risk is then
\[
L(\theta) = \mathbb{E}_{(R,\{\tau_n\}) \sim \mathcal{D}_R, s \sim P_S}[\ell_\theta(R,\{\tau_n\},s)].
\]

\begin{remark}
\label{remark:leastsquaresmin}
    As always for least squares problems, for any \(\theta^* \in \argmin_{\theta \in \Theta} L(\theta)\), we have
    \[
    R_{\theta^*}(S,T)=\bar{R}_|(S,T)\quad\text{a.s. under }P_S\times P_T.
    \]
\end{remark}

\begin{lemma}[Continuous a.s.\ equality implies equality on the support]
\label{lem:ae-to-support}
Let \(f,g:\mathcal S\times\mathcal X\to\mathbb R\) be continuous. If
\[
f(S,T)=g(S,T)\quad\text{a.s. under }P_S\times P_T,
\]
with \(\supp(P_S)=\mathcal S\) (Assumption~\ref{ass:datacoverage}) and \(\supp(P_T)=\mathcal X\), then
\[
f(s,t)=g(s,t)\quad\text{for all }(s,t)\in\mathcal S\times\mathcal X.
\]
\end{lemma}

\begin{proof}
Let \(h=f-g\). Then \(h\) is continuous and \(h(S,T)=0\) almost surely. Suppose for contradiction that there exists \((s_0,t_0)\in\mathcal S\times\mathcal X\) with \(h(s_0,t_0)\neq 0\). By continuity, there is an open neighborhood \(U\) of \((s_0,t_0)\) and \(\varepsilon>0\) such that \(|h|>\varepsilon\) on \(U\). But \((s_0,t_0)\in \supp(P_S\times P_T)\), so \((P_S\times P_T)(U)>0\), contradicting \(h(S,T)=0\) almost surely.
\end{proof}

Remark \ref{remark:leastsquaresmin} and Lemma \ref{lem:linearity} show that the optimal reward function for MSE SRI indeed produces Bayesian-optimal policies. We must now show that this happens in the limit of infinite data, as well. Specifically, we show: 1) that SRI algorithms indeed converge to some optimal $\theta^*$; 2) that this convergence is uniform on \(\mathcal S\times\mathcal X\); 3) that uniform convergence implies convergence of the optimal policies. Steps 1 and 2 follow from standard learning theory patterns in Lemmas \ref{lem:pointwise} and \ref{lem:uniform}. Step 3 is completed in the Main Theorem.

Given a dataset of \(K\) tasks and \(M\) state samples per task,
\[
\{(R_k,\{\tau_{n,k}\})\}_{k=1}^K \sim \mathcal{D}_R^K, \quad \{s_{m,k}\}_{m=1}^M \sim P_S^M,
\]
we form the empirical risk:
\[
\hat{L}_K(\theta) = \frac{1}{K M} \sum_{k=1}^K \sum_{m=1}^M \ell_\theta(R_k,\{\tau_{n,k}\},s_{m,k}).
\]

\begin{remark}
\label{rem:boundedrisk}
    Under Assumptions \ref{ass:well-behaved}, \ref{ass:bounded}, and \ref{ass:mse}, $l_\theta$, $L(\theta)$ and $\hat{L}_K(\theta)$ are all bounded, continuous and measurable. (See Remark \ref{rem:boundedtheta}.)
\end{remark}

\begin{remark}
\label{rem:onlyk}
    Because increasing $K$ provides additional data of all types, we can ignore $M$ in the empirical risk ($M$ need not approach $\infty$). In particular, note that $L(\theta)$ remains the same even if $l_\theta$ is a sample risk over states $s_m$ (i.e., a loss for all state-reward samples in a single task) instead of a loss for one individual sample.
\end{remark}

We can now show both pointwise and uniform convergence of $R_{\theta_K}$.

\begin{lemma}[Pointwise Convergence of $R_{\theta_K}$]
\label{lem:pointwise}
Suppose \(\theta_K \in \arg\min_{\theta \in \Theta}\hat{L}_K(\theta)\) for each \(K\). Under Assumptions \ref{ass:well-behaved}$-$\ref{ass:datacoverage}, even without a unique minimizer \(\theta^*\), we have for all \((s,t)\in\mathcal S\times\mathcal X\):
\[
R_{\theta_K}(s,t) \xrightarrow{\text{a.s.}} \bar{R}_|(s,t).
\]
\end{lemma}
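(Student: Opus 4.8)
The plan is to follow the standard M-estimation route: first establish a uniform strong law of large numbers (a Glivenko--Cantelli property) for the empirical risk, then use it to pin down the behavior of the minimizers, and finally translate parameter-level information into convergence of the reward values. Concretely, the first step I would carry out is to prove
\[
\sup_{\theta \in \Theta} \bigl| \hat{L}_K(\theta) - L(\theta) \bigr| \xrightarrow{\text{a.s.}} 0.
\]
By Remark \ref{rem:boundedrisk} (which rests on Assumptions \ref{ass:well-behaved}, \ref{ass:bounded}, and \ref{ass:mse}) the per-sample loss $\ell_\theta$ is continuous in $\theta$, measurable, and uniformly bounded, so a constant serves as an integrable envelope. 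Together with compactness of $\Theta$ (Assumption \ref{ass:well-behaved}(c)) and the i.i.d.\ sampling of tasks, these are exactly the hypotheses of a uniform SLLN; and by Remark \ref{rem:onlyk} I may fold each task's state samples into a single i.i.d.\ per-task loss and let only $K \to \infty$. The ordinary strong law applied pointwise in $\theta$, upgraded to uniformity through the usual compactness-plus-continuity covering argument, yields the display on a single probability-one event that does not depend on any particular $(s,\{\tau_n\})$.

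The second step is to deduce convergence of the empirical minimizers to the population minimizing set. Let $\Theta^\ast = \arg\min_{\theta \in \Theta} L(\theta)$, which is nonempty and closed since $L$ is continuous (Remark \ref{rem:boundedrisk}) and $\Theta$ is compact. Uniform convergence of $\hat{L}_K$ forces $L(\theta_K) \to \min_\theta L(\theta)$, and then compactness of $\Theta$ guarantees that every subsequence of $(\theta_K)$ admits a further subsequence converging to a limit lying in $\Theta^\ast$. Crucially, I would \emph{not} assume that $(\theta_K)$ itself converges, since Remark \ref{rem:nouniquetheta} warns that the minimizing parameter need not be unique; instead I route the remaining argument entirely through function values.

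The final step converts this into pointwise convergence of $R_{\theta_K}$. By Remark \ref{remark:leastsquaresmin} every $\theta^\ast \in \Theta^\ast$ satisfies $R_{\theta^\ast} = \bar{R}_|$ as functions; since both are continuous (Assumption \ref{ass:well-behaved}(d), using that $\bar{R}_| \in \{R_\theta\}$ by Assumption \ref{ass:modelcapacity}) and $P_S$ has full support (Assumption \ref{ass:datacoverage}), this identification holds at \emph{every} $(s,\{\tau_n\})$ rather than merely $P_S$-almost everywhere. Fixing any $(s,\{\tau_n\})$, if $R_{\theta_K}(s,\{\tau_n\})$ failed to converge to $\bar{R}_|(s,\{\tau_n\})$, there would be a subsequence staying $\varepsilon$-away; extracting from it a parameter-convergent sub-subsequence with limit $\theta^\ast \in \Theta^\ast$ and invoking continuity of $R_\theta$ in $\theta$ gives $R_{\theta_K}(s,\{\tau_n\}) \to R_{\theta^\ast}(s,\{\tau_n\}) = \bar{R}_|(s,\{\tau_n\})$, a contradiction. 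Because the probability-one event from step one is common to all arguments, this delivers the stated almost-sure convergence simultaneously for all $(s,\{\tau_n\})$.

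I expect the main obstacle to be precisely the non-uniqueness of the minimizing parameter: the $\theta_K$ may oscillate and never converge, so the proof cannot work in parameter space and must instead exploit that all minimizers collapse to the single function $\bar{R}_|$. The secondary technical hurdle is the passage from the $P_S$-almost-everywhere characterization of the least-squares minimizer to genuine pointwise equality, which is exactly what the full-support assumption and joint continuity are designed to supply, and which also makes the conclusion strong enough to seed the uniform-convergence argument of Lemma \ref{lem:uniform}.
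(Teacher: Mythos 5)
Your proposal is correct and follows essentially the same route as the paper's proof: a Glivenko--Cantelli argument giving $\sup_{\theta\in\Theta}|\hat{L}_K(\theta)-L(\theta)|\xrightarrow{\text{a.s.}}0$, a compactness/subsequence argument showing the empirical minimizers approach the set $\Theta^*$ of population minimizers, and the identification $R_{\theta^*}=\bar{R}_|$ for every $\theta^*\in\Theta^*$ combined with continuity of $\theta\mapsto R_\theta$. If anything, your write-up is slightly more careful than the paper's at the final step, since you explicitly upgrade the $P_S$-almost-everywhere least-squares characterization to genuine pointwise equality via full support and continuity, and you route the conclusion through function values rather than asserting convergence of $(\theta_K)$ itself.
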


\begin{proof}
As usual for this type of result, our proof follows three steps: \textbf{1)} we show that $l_\theta$ is Glivenko-Cantelli; \textbf{2)} we conclude that $\hat{L}_K$ almost surely converges uniformly to $L$; \textbf{3)} we use the existence of a convergent subsequence $\theta_{K_j}$ to conclude the desired result.

\paragraph{Steps 1 and 2} By Remark \ref{rem:boundedrisk}, we can directly conclude that $l_\theta$ is Glivenko-Cantelli \citep[see, for example, Remark 3.1 and Theorem 3.2,][]{Sen2022gentle}. Thus, by definition, $\hat{L}_K$ almost surely uniformly converges to $L$:
\[
\sup_{\theta \in \Theta}|\hat{L}_K(\theta) - L(\theta)| \xrightarrow{\text{a.s.}} 0.
\]

\paragraph{Step 3} On the almost-sure event where \(\hat{L}_K\) uniformly converges to \(L\), because \(\Theta\) is compact, it is nearly sufficient to show that any convergent subsequence of \(\theta_K\) converges to a minimizer of \(L(\theta)\).

Since $\Theta$ is compact, let $(\theta_{K_j})$ be a convergent subsequence of $\theta_{K}$, and let $\theta^*$ be its limit. Fix $\varepsilon > 0$, and set $K_0$ such that for all $K \geq K_0$, $\sup_{\theta \in \Theta}|\hat{L}_K(\theta) - L(\theta)| < \frac{\varepsilon}{2}$. Then choose $j_0$ such that for all $j \geq j_0$, both $K_j \geq K_0$ and $|L(\theta_{K_j}) - L(\theta^*)| < \frac{\varepsilon}{2}$ (recall that $L$ is continuous). Then, for all $j \ge j_0$,
\begin{align*}
    |\hat{L}_{K_j}(\theta_{K_j}) - L(\theta^*)| \leq& |\hat{L}_{K_j}(\theta_{K_j}) - L(\theta_{K_j})| \\ &+ |L(\theta_{K_j}) - L(\theta^*)| \\
    <& \frac{\varepsilon}{2} + \frac{\varepsilon}{2} \\
    =& \varepsilon,
\end{align*}
so $\hat{L}_{K_j}(\theta_{K_j}) \to L(\theta^*)$. But by definition, for any $\theta \in \Theta$ and all $j$,
\begin{align*}
    & \hat{L}_{K_j}(\theta_{K_j}) \leq \hat{L}_{K_j}(\theta)\\
    &\Rightarrow \lim_{j\rightarrow \infty} \hat{L}_{K_j}(\theta_{K_j}) \leq \lim_{j\rightarrow \infty} \hat{L}_{K_j}(\theta) \\
    &\Rightarrow L(\theta^*) \leq L(\theta).
\end{align*}
Hence, \(\theta^* \in \argmin_{\theta' \in \Theta} L(\theta')\). Letting \(\Theta \supseteq \Theta^* = \argmin_{\theta \in \Theta} L(\theta)\), if \(d(\theta_K,\Theta^*)\) did not converge to \(0\), then some subsequence would remain at distance at least \(\delta>0\) from \(\Theta^*\); by compactness it would admit a convergent subsubsequence, contradicting the fact that every convergent subsequence limit lies in \(\Theta^*\). Therefore
\[
d(\theta_K, \Theta^*)\footnote{Standard point-set distance: \(d(x, A) = \inf_{a \in A} |x - a|\).} \xrightarrow{\text{a.s.}} 0.
\]
Fix \((s,t)\in\mathcal S\times\mathcal X\), and choose \(\theta_K^*\in\Theta^*\) such that \(d(\theta_K,\theta_K^*)=d(\theta_K,\Theta^*)\), so \(d(\theta_K,\theta_K^*)\to 0\) a.s. By continuity of \(\theta\mapsto R_\theta(s,t)\), we get
\[
R_{\theta_K}(s,t)-R_{\theta_K^*}(s,t)\xrightarrow{\text{a.s.}}0.
\]
By Remark \ref{remark:leastsquaresmin}, \(R_{\theta_K^*}(S,T)=\bar R_{|}(S,T)\) a.s.; by Lemma \ref{lem:ae-to-support}, this implies
\[
R_{\theta_K^*}(s,t)=\bar R_{|}(s,t)\quad\forall (s,t)\in\mathcal S\times\mathcal X.
\]
Therefore \(R_{\theta_K}(s,t)\to \bar R_{|}(s,t)\) almost surely.
\end{proof}

\begin{lemma}[Uniform Convergence of $(R_{\theta_K})$]
\label{lem:uniform}

Define $\theta_K$ as in Lemma \ref{lem:pointwise}. Given the equicontinuity of $\{R_\theta\}$, we also have almost sure uniform convergence of $(R_{\theta_K})$ on \(\mathcal S\times\mathcal X\):
\[
\sup_{(s,t)\in\mathcal S\times\mathcal X}\left|R_{\theta_K}(s,t)-\bar{R}_{|}(s,t)\right| \xrightarrow{\text{a.s.}} 0.
\]

\end{lemma}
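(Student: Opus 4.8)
The plan is to upgrade the pointwise convergence already established in Lemma \ref{lem:pointwise} to uniform convergence over the compact product domain $\mathcal{S} \times \mathcal{T}^N$ (compact by Assumption \ref{ass:well-behaved}(a)), using the equicontinuity hypothesis of Assumption \ref{ass:well-behaved}(d) together with the Arzel\`a-Ascoli theorem. The crucial observation is that the proof of Lemma \ref{lem:pointwise} actually delivers the parameter-level statement $d(\theta_K, \Theta^*) \xrightarrow{\text{a.s.}} 0$, where $\Theta^* = \argmin_{\theta \in \Theta} L(\theta)$, and that this is a \emph{single} almost-sure event. I would fix that event once and run the remainder of the argument deterministically on it, which sidesteps any issue of interchanging ``for all $(s,\{\tau_n\})$'' with ``almost surely.''

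On that event I would proceed by contradiction. First, $\{R_\theta\}_{\theta \in \Theta}$ is uniformly bounded (Remark \ref{rem:boundedtheta}) and equicontinuous in $(s, \{\tau_n\})$ (Assumption \ref{ass:well-behaved}(d)), so by Arzel\`a-Ascoli the family is relatively compact in $C(\mathcal{S} \times \mathcal{T}^N)$ under the sup norm. Now suppose $\norm{R_{\theta_K} - \bar{R}_|}_\infty \not\to 0$; then there exist $\varepsilon > 0$ and a subsequence $(K_j)$ with $\norm{R_{\theta_{K_j}} - \bar{R}_|}_\infty \geq \varepsilon$ for all $j$. By relative compactness I can extract a further subsequence $(K_{j_l})$ along which $R_{\theta_{K_{j_l}}}$ converges uniformly to some $g \in C(\mathcal{S} \times \mathcal{T}^N)$. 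Uniform convergence implies pointwise convergence, and the pointwise limit is forced to be $\bar{R}_|$ by Lemma \ref{lem:pointwise}, so $g = \bar{R}_|$. But then $\norm{R_{\theta_{K_{j_l}}} - \bar{R}_|}_\infty \to 0$, contradicting the choice of $\varepsilon$ and $(K_j)$. Hence uniform convergence holds on the almost-sure event, giving the claim.

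A more direct route, which I would include if space permits, bypasses the contradiction entirely: joint continuity of $R_\theta(s,\{\tau_n\})$ on the compact set $\Theta \times \mathcal{S} \times \mathcal{T}^N$ makes it uniformly continuous, which is precisely the statement that $\theta \mapsto R_\theta$ is continuous from $\Theta$ into $(C(\mathcal{S}\times\mathcal{T}^N), \norm{\cdot}_\infty)$. Since $\Theta^*$ is compact (closed in $\Theta$ by continuity of $L$) and every point of $\Theta^*$ maps to the single function $\bar{R}_|$ by Remark \ref{remark:leastsquaresmin}, choosing nearest points $\theta_K^* \in \Theta^*$ yields $\norm{R_{\theta_K} - \bar{R}_|}_\infty = \norm{R_{\theta_K} - R_{\theta_K^*}}_\infty \to 0$ directly from $|\theta_K - \theta_K^*| \to 0$.

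I expect the main obstacle to be bookkeeping rather than depth: making precise that the ``for all $(s,\{\tau_n\})$'' pointwise convergence of Lemma \ref{lem:pointwise} can be promoted to a uniform statement without a null-set-per-point blowup. Grounding everything in the single almost-sure event $\{d(\theta_K,\Theta^*)\to 0\}$ resolves this, and the only genuinely analytic input is the verification that equicontinuity plus uniform boundedness on the compact domain are exactly the hypotheses Arzel\`a-Ascoli requires (and, for the direct route, that joint continuity on a compact set yields sup-norm continuity of $\theta \mapsto R_\theta$).
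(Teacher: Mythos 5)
Your primary argument is essentially the paper's proof: assume uniform convergence fails, extract a subsequence violating it by $\varepsilon$ at a sequence of points, apply Arzel\`a--Ascoli (via equicontinuity and uniform boundedness) to get a further uniformly convergent sub-subsequence, identify its limit as $\bar{R}_|$ using Lemma \ref{lem:pointwise}, and derive a contradiction. Your explicit handling of the almost-sure event --- fixing the single event $\{d(\theta_K,\Theta^*)\to 0\}$ from Lemma \ref{lem:pointwise} and arguing deterministically on it --- is a point the paper only gestures at parenthetically, and is worth being explicit about. Your second, direct route (continuity of $\theta \mapsto R_\theta$ into $(C(\mathcal{S}\times\mathcal{T}^N),\norm{\cdot}_\infty)$, compactness of $\Theta^*$, and the fact that all of $\Theta^*$ maps to the single function $\bar{R}_|$) is a genuinely different and arguably cleaner argument that the paper does not take; it trades the contradiction/subsequence bookkeeping for the one-time verification that equicontinuity plus pointwise continuity in $\theta$ on the compact domain yields sup-norm continuity of the parameter-to-function map. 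Both routes are sound.
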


\begin{proof}

The desired result follows smoothly from pointwise convergence and the Arzel\`a-Ascoli Theorem.

If \((R_{\theta_K})\) did not converge uniformly to \(\bar{R}_{|}\) on \(\mathcal S\times\mathcal X\), then there would exist an \(\varepsilon > 0\), a subsequence \((R_{\theta_{K_i}})\), and a sequence \(\big((s_i,t_i)\big)\in(\mathcal S\times\mathcal X)^\mathbb N\) such that
\[
\left|R_{\theta_{K_i}}(s_i,t_i)-\bar{R}_{|}(s_i,t_i)\right| \geq \varepsilon \text{ for all } i.
\]

Of course, this subsequence also satisfies the conditions of Arzel\`a-Ascoli -- most notably, equicontinuity -- on the compact domain \(\mathcal S\times\mathcal X\), so we can extract a further uniformly convergent subsequence \((R_{\theta_{K_{i_j}}})\). By Lemma \ref{lem:pointwise}, we know that pointwise, \((R_{\theta_{K_{i_j}}})\) almost surely converges to \(\bar{R}_|\) (using the same event for almost sure convergence as all other almost sure convergences here).

Since \(\mathcal S\times\mathcal X\) is compact, \(\big((s_i,t_i)\big)\) has a convergent subsequence; passing to that subsequence (and relabeling), we may assume \((s_{i_j},t_{i_j})\to(s^*,t^*)\). The uniform convergence of \((R_{\theta_{K_{i_j}}})\) together with pointwise convergence to \(\bar{R}_|\) then gives \(R_{\theta_{K_{i_j}}}(s_{i_j},t_{i_j})\to\bar{R}_{|}(s^*,t^*)\), contradicting \(\left|R_{\theta_{K_{i_j}}}(s_{i_j},t_{i_j})-\bar{R}_{|}(s_{i_j},t_{i_j})\right| \geq \varepsilon\). Thus, \((R_{\theta_K})\) must converge uniformly.

\end{proof}

\subsection{Convergence of SRI's optimal policies to Bayesian optimality}
\label{sec:policy-convergence}

Finally, we conclude that any SRI algorithm satisfying our assumptions produces asymptotically Bayes-optimal policies.

\begin{namedthm*}{Main Theorem}
[Asymptotic Optimality of SRI Algorithms]
\label{thm:policy_convergence}
Let $\mathcal{M}_R = (\mathcal{S}, \mathcal{A}, p, R, d_0, \gamma)$ be an MDP with random reward \(R\), which is drawn together with $\{\tau_n\}$ from a distribution $\mathcal{D}_R \in \mathcal{P}(\mathcal{R} \times \mathcal{T}^N)$.

\textbf{Claim:} Any SRI algorithm that globally minimizes its empirical risk and satisfies Assumptions \ref{ass:well-behaved}$-$\ref{ass:datacoverage} is asymptotically optimal in the sense that the policies it produces approach those that maximize the expected return under the posterior distribution of \(R\) given $\{\tau_n\}$.

\end{namedthm*}

\paragraph{Uniform Convergence of Inferred Rewards}
   By Lemma \ref{lem:uniform}, we have that as \(K \to \infty\),
   \[
   \sup_{(s,t)\in\mathcal S\times\mathcal X}|R_{\theta_K}(s,t)-\bar{R}_|(s,t)| \xrightarrow{\text{a.s.}} 0.
   \]
   Here \(\bar{R}_|\) is the Bayes-optimal reward function. More precisely, for \(P_T\)-almost every realized task input \(t\), Lemma \ref{lem:linearity} applied to the conditional distribution \(P_{R\mid T=t}\) shows that the posterior-mean reward \(s \mapsto \mathbb{E}[R(s)\mid T=t]\) is Bayes-optimal. By Assumption \ref{ass:modelcapacity}, \(\bar{R}_|(\cdot,t)\) is a continuous version of this posterior mean. In particular, \(\bar{R}_|\) is the reward function that, if known, would yield the policy maximizing the expected return given \(\{\tau_n\}\).

   Thus, for any \(\delta > 0\), there exists \(K_\delta\) such that for all \(K > K_\delta\),
   \[
   \sup_{(s,t)\in\mathcal S\times\mathcal X}|R_{\theta_K}(s,t)-\bar{R}_|(s,t)| < \delta.
   \]
   Since the test-time trajectory input \(T\) is drawn from \(P_T\), we have \(T\in\mathcal X\) almost surely, so for the realized task input we also have
   \[
   \sup_{s\in\mathcal S}|R_{\theta_K}(s,T)-\bar{R}_|(s,T)|<\delta\quad\text{a.s.}
   \]

\paragraph{Policy Convergence} 
   Let \(\pi_{\theta_K}^*\) be any optimal policy for the MDP \((\mathcal{S}, \mathcal{A}, p, R_{\theta_K}, d_0, \gamma)\). We wish to show that \(\{\pi_{\theta_K}^*\}\) approaches optimality for \(\bar{R}_|\) as \(K \to \infty\).

   Define the value functions under a reward function \(R'\) and policy \(\pi\) as:
   \[
   V_{R'}^\pi(s) = \mathbb{E}\left[\sum_{t=0}^{\infty} \gamma^t R'(S_t,T) \mid S_0=s,\pi\right].
   \]
   Let \(V_{R_{\theta_K}}^*(s) = V_{R_{\theta_K}}^{\pi_{\theta_K}^*}(s)\) be the optimal value under \(R_{\theta_K}\), and let \(V_{\bar{R}_|}^*(s)\) be the optimal value under \(\bar{R}_|\).

   We know:
   \[
   V_{R_{\theta_K}}^{\pi_{\theta_K}^*}(s) \geq V_{R_{\theta_K}}^\pi(s), \quad \forall \pi.
   \]

   We want to relate \(V_{\bar{R}_|}^{\pi_{\theta_K}^*}\) to \(V_{\bar{R}_|}^*\). Consider the difference:
   \[
   V_{\bar{R}_|}^*(s) - V_{\bar{R}_|}^{\pi_{\theta_K}^*}(s).
   \]
   Introduce the intermediate value functions under \(R_{\theta_K}\):
   \begin{align*}
   V_{\bar{R}_|}^*(s) - V_{\bar{R}_|}^{\pi_{\theta_K}^*}(s) &= [V_{\bar{R}_|}^*(s) - V_{R_{\theta_K}}^*(s)] \\ &+ [V_{R_{\theta_K}}^*(s) - V_{\bar{R}_|}^{\pi_{\theta_K}^*}(s)].
   \end{align*}

   We will bound each piece.

   Bounding \(|V_{\bar{R}_|}^*(s) - V_{R_{\theta_K}}^*(s)|\):     
     Because \(\sup_{(s,t)\in\mathcal S\times\mathcal X}|R_{\theta_K}(s,t)-\bar{R}_|(s,t)| < \delta\), we have for any policy \(\pi\):
     \begin{align*}
         |V_{R_{\theta_K}}^\pi(s) - V_{\bar{R}_|}^\pi(s)| &\leq \mathbb{E}\left[\sum_{t=0}^{\infty}\gamma^t |R_{\theta_K}(S_t,T)-\bar{R}_|(S_t,T)|\right].
     \end{align*}
     By Assumption \ref{ass:bounded}, we are in one of two cases:
     \[
     C_{\gamma,L}\coloneqq
     \begin{cases}
       \frac{1}{1-\gamma}, & \gamma<1,\\
       L+1, & \gamma=1 \text{ and } R_t=0 \text{ for all } t>L.
     \end{cases}
     \]
     Therefore,
     \[
     |V_{R_{\theta_K}}^\pi(s) - V_{\bar{R}_|}^\pi(s)| \leq C_{\gamma,L}\,\delta.
     \]

     Thus:
     \[
     \|V_{R_{\theta_K}}^\pi - V_{\bar{R}_|}^\pi\|_\infty \leq C_{\gamma,L}\,\delta, \quad \text{for any } \pi.
     \]
     In particular, this applies to the optimal policies under either reward:
     \[
     |V_{\bar{R}_|}^*(s) - V_{R_{\theta_K}}^*(s)| \leq C_{\gamma,L}\,\delta.
     \]

   Bounding \(|V_{R_{\theta_K}}^*(s) - V_{\bar{R}_|}^{\pi_{\theta_K}^*}(s)|\):    
     By optimality of \(\pi_{\theta_K}^*\) under \(R_{\theta_K}\),
     \[
     V_{R_{\theta_K}}^*(s) = V_{R_{\theta_K}}^{\pi_{\theta_K}^*}(s).
     \]
     Thus:
     \[
     V_{R_{\theta_K}}^*(s) - V_{\bar{R}_|}^{\pi_{\theta_K}^*}(s) = [V_{R_{\theta_K}}^{\pi_{\theta_K}^*}(s) - V_{\bar{R}_|}^{\pi_{\theta_K}^*}(s)].
     \]
     This difference is bounded by the same \(C_{\gamma,L}\delta\) argument as above:
     \[
     |V_{R_{\theta_K}}^{\pi_{\theta_K}^*}(s) - V_{\bar{R}_|}^{\pi_{\theta_K}^*}(s)| \leq C_{\gamma,L}\,\delta.
     \]

   Combining these results, we have:
   \[
   V_{\bar{R}_|}^*(s) - V_{\bar{R}_|}^{\pi_{\theta_K}^*}(s) \leq C_{\gamma,L}\delta + C_{\gamma,L}\delta = 2C_{\gamma,L}\delta.
   \]

   Since \(\delta>0\) was arbitrary and can be made as small as desired by taking \(K\) sufficiently large, for any \(\varepsilon > 0\), choose \(\delta = \frac{\varepsilon}{2C_{\gamma,L}}\). Then for all \(K > K_\delta\),
   \[
   V_{\bar{R}_|}^*(s) - V_{\bar{R}_|}^{\pi_{\theta_K}^*}(s) \leq \varepsilon.
   \]
   Therefore:
   \[
   V_{\bar{R}_|}^{\pi_{\theta_K}^*}(s) \geq V_{\bar{R}_|}^*(s) - \varepsilon, \quad \forall s \in \mathcal{S},
   \] and in particular
   \[
   J_{\bar{R}_|}(\pi^*_{\theta_K}) \geq J^*_{\bar{R}_|} - \varepsilon.
   \]

\paragraph{Conclusion}
   We have shown that as \(K \to \infty\), the SRI algorithm's inferred reward functions \(R_{\theta_K}\) converge uniformly to \(\bar{R}_|\), and that the corresponding optimal policies \(\pi_{\theta_K}^*\) approach optimality under \(\bar{R}_|\). Since \(\bar{R}_|\) is the Bayes-optimal reward given \(\{\tau_n\}\) (Lemma \ref{lem:linearity}), the policies derived from the SRI algorithm asymptotically maximize the expected return under the posterior over \(R\).

\section{Architecture Details}
\label{app:arch}

We encoded trajectories using a 3-head transformer with two 256-dimensional transformer layers and a final 2-layer MLP with hidden dimension 50, outputting a trajectory representation of length 100; processed trajectory encodings into a task encoding with a default set transformer \citep{lee2019set} with hidden dimension 128, outputting a task representation of dimension 256; encoded states using a 2-layer MLP with hidden dimension 256 and encoding dimension 100; and processed task and state representations into a final reward with a 2-layer MLP of hidden dimension 256. All activations were leaky ReLU.

\section{SGI, SAI, and Additional Experiments}
\label{app:additionalresults}

\subsection{Imitation learning baselines on \textsc{hard} demonstrations}

Table~\ref{tab:baselines-hard} reports the full baseline results for the \textsc{hard} demonstration condition summarized in Table~\ref{tab:sri-data-efficiency}.

\begin{table}[h]
  \centering
  \footnotesize
  \caption{Imitation learning baselines on Meta-World with \textsc{hard} demos. Mean normalized goal proximity ($\uparrow$) $\pm$ standard error over 30 trials.}
  \label{tab:baselines-hard}
  \begin{tabular}{lcc}
    \toprule
    Method & Reach & Pick-place \\
    \midrule
    GAIL  & $-1.81\!\pm\!0.37$ & $-0.004\!\pm\!0.002$ \\
    AIRL  & $-1.58\!\pm\!0.30$ & $-0.001\!\pm\!0.000$ \\
    BC    & $-0.93\!\pm\!0.08$ & $-0.002\!\pm\!0.001$ \\
    GT-RL & $\phantom{-}0.957\!\pm\!0.005$ & $\phantom{-}0.903\!\pm\!0.011$ \\
    \bottomrule
  \end{tabular}
\end{table}

\subsection{SGI and SAI}

SRI provides an example of a \textit{supervised intent inference} algorithm where ``intent'' is formalized as a reward function. However, this general intent inference framework is easily extended to other formalizations of intent. In this section, we provide definitions and empirical results for two further examples: \textbf{supervised goal inference (SGI)}, where the intent object being mapped to is some latent representation of the task; and \textbf{supervised action inference (SAI)}, where the intent object is a (presumably optimal) policy. In each case, the central conceptual innovation over prior methods (e.g., goal inference or imitation learning) is in viewing the observed behavior as an \textit{indication} of the demonstrator's intent, rather than an attempted \textit{enactment} of it. See Figure \ref{fig:arch_sgi_sai} for an abstract diagram of our implementations of SGI and SAI.

\begin{wrapfigure}[25]{R}{0.5\textwidth}
\centering
    \includegraphics[width=\linewidth]{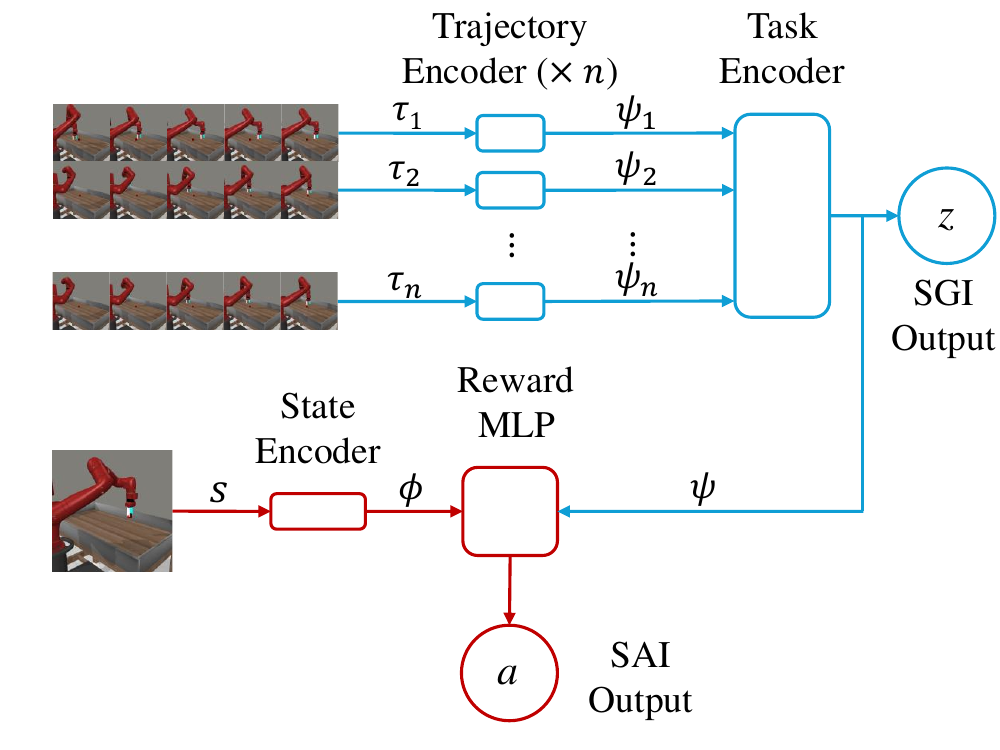}
    \captionof{figure}{%
      \textbf{Architecture for Supervised Goal Inference and Supervised Action Inference.} Trajectories are encoded with a transformer and pooled by a set
      transformer into a task embedding $\psi$ (\textcolor{ppt-blue}{blue}, once per task). In the case of supervised goal inference (SGI), this embedding is directly used as the goal prediction (i.e., $z=\psi$). In the case of supervised action inference (SAI), similar to SRI,
      each state is mapped to an observation embedding $\phi$ by two
      small MLPs (\textcolor{ppt-red}{red}, per timestep); a final MLP combines
      $(\psi,\phi)$ to predict the action.}
    \label{fig:arch_sgi_sai}
\end{wrapfigure}

\subsubsection{SGI}

Supervised goal inference (SGI) takes as its inference target any latent representation of the task that can be used to determine the reward function or an optimal policy. In our experiments with Meta-World's reach and pick-place tasks \citep{yu2019metaworld}, we use the $(x, y, z)$ position of the goal as the target, and then infer actions using Meta-World's oracular policies. However, depending on the structure of the task and available information, inference targets could just as easily consist of the parameters of a neural network, the weights of a linear reward function, or a one-hot task index, among others.

Because the target is not a function (in particular, not a function taking states as inputs), SGI only requires a multi-task dataset pairing behavior trajectories with goal labels.

\begin{definition}[Supervised Goal Inference]

Given: \textbf{a}) a random set of behavior trajectories $\{T_n\}_{n=1}^N \in \mathcal{T}^N$ and a random (uncertain) goal $Z \in \mathbb{R}^{D_z}$ (for some goal dimension $D_z$) jointly following a distribution $\mathcal{D}_T$; \textbf{b}) 
a parameterized function family $f_{\theta}: \mathcal{T}^N \rightarrow \mathbb{R}^{D_z}$; and \textbf{c}) a regression loss function $\mathcal{L}: \mathbb{R}^{D_z} \times \mathbb{R}^{D_z} \rightarrow \mathbb{R}$, \textit{supervised goal inference} (SGI) is the following \textbf{multi-task} minimization problem:
\begin{equation} 
    \argmin_{\theta}\mathbb{E}\bigg[\mathcal{L}\bigg (f_{\theta}(\{T_n\}_{n=1}^N), Z\bigg) \bigg ]. \label{sgi}
\end{equation}
\end{definition}

See Algorithm \ref{alg:sgi} for an example gradient-descent-based implementation with $L^2$ loss.

\subsubsection{SAI}

Supervised action inference (SAI) takes as its inference target any policy. For simplicity of presentation, the following definition assumes that the target policy is deterministic and that the action space is continuous (or at least represented continuously), but allowing stochastic policies and discrete action spaces would require minimal changes. Note that SAI's target actions are not assumed to come from the behavior trajectories; as with SRI, those trajectories are assumed to be observation-only, and potentially arbitrarily suboptimal, thereby serving only as an \textit{indication} of the agent's intended actions: i.e., the input to the intent-inference problem. The intended actions -- the output of the intent-inference problem -- are provided in the form of state-action samples, akin to SRI's state-reward samples.

\begin{definition}[Supervised Action Inference]

Assume that the action space of the MDP is $\mathcal{A} = \mathbb{R}^{D_a}$ for some dimension $D_a$ (i.e., that the actions are continuous or continuously parameterized). Then, given: \textbf{a}) a random set of behavior trajectories $\{T_n\}_{n=1}^N \in \mathcal{T}^N$ and a random (uncertain) policy $\Pi$ jointly following a distribution $\mathcal{D}_T$; \textbf{b}) a random state $S$ following some distribution $\mathcal{D}_S$; \textbf{c}) 
parameterized function families $f_{\theta_f}: \mathcal{T}^N \rightarrow \Psi$ and $g_{\theta_g}: \mathcal{S} \times \Psi \rightarrow \mathbb{R}^{D_a}$; and \textbf{d}) a regression loss function $\mathcal{L}: \mathbb{R}^{D_a} \times \mathbb{R}^{D_a} \rightarrow \mathbb{R}$, \textit{supervised action inference} (SAI) is the following \textbf{multi-task} minimization problem:
\begin{equation} 
    \argmin_{\theta_f, \theta_g}\mathbb{E}\bigg[\mathcal{L}\bigg (g_{\theta_g}\big(S, f_{\theta_f}(\{T_n\}_{n=1}^N) \big), \Pi(S)\bigg) \bigg ]. \label{sai}
\end{equation}
\end{definition}

See Algorithm \ref{alg:sai} for an example gradient-descent-based implementation with $L^2$ loss.

\begin{figure}
  \centering
    \begin{minipage}[t]{\linewidth}
    \captionsetup{type=algorithm}      
    \begin{algorithm}[H]
    \caption{SGI Training with Gradient Descent}
    \label{alg:sgi}
    \begin{algorithmic}[1] 
        \State {\bfseries Input:} Num tasks $K$, num dems/task $N_T$, goal space dimension $D_z$, batch size $M$, num inference dems $N_I$, learning rate $\alpha$, and dataset $\mathcal{D} = \{(\{\tau_{k,i}\}_{i=1}^{N_T}, z_k)\}_{k=1}^K$, where $z_k \in \mathbb{R}^{D_z}$ is the goal for task $k$.
        \State Initialize goal inference model $f_{\theta}: \mathcal{T}^{N_I} \rightarrow \mathbb{R}^{D_z}$ with parameters $\theta$.
        \Repeat
            \State Sample batch $\{(\mathcal{T}_b, z_b)\}_{b=1}^M \sim \mathcal{D}$, where for each batch item $b$, $\mathcal{T}_b=\{\tau_{b,i}\}_{i=1}^{N_T}$ is the set of $N_T$ trajectories and $z_b$ is the corresponding goal.
            \State For each task $b \in [1,M]$, randomly select $N_I$ trajectories $\mathcal{T}'_b \subseteq \mathcal{T}_b$.
            \State Predict goals $(\hat{z}_b)_{b=1}^M = (f_{\theta}(\mathcal{T}'_b))_{b=1}^M$.
            \State Compute loss $\mathcal{L}(\theta) \leftarrow \frac{1}{M}\sum_{b=1}^M \|\hat{z}_b - z_b\|^2_2$.
            \State Update parameters $\theta \leftarrow \theta - \alpha \nabla_{\theta} \mathcal{L}(\theta)$.
        \Until{convergence}
        \State {\bfseries Output:} Learned parameters $\theta$.
    \end{algorithmic}
\end{algorithm}
  \end{minipage}
  
  \begin{minipage}[t]{\linewidth}
    \captionsetup{type=algorithm}      
    \begin{algorithm}[H]
    \caption{SAI Training with Gradient Descent}
    \label{alg:sai}
    \begin{algorithmic}[1] 
        \State {\bfseries Input:} Num tasks $K$, num dems/task $N_T$, num state-action samples/task $N_s$, action space dimension $D_a$, batch size $M$, num inference dems $N_I$, learning rate $\alpha$, and dataset $\mathcal{D} = \{(\{\tau_{k,i}\}_{i=1}^{N_T}, \{(s_{k,j}, a_{k,j})\}_{j=1}^{N_s})\}_{k=1}^K$.
        \State Initialize encoder $f_{\theta_f}: \mathcal{T}^{N_I} \rightarrow \mathbb{R}^d$ and action model $g_{\theta_g}: \mathcal{S} \times \mathbb{R}^d \rightarrow \mathbb{R}^{D_a}$. Let $\theta = (\theta_f, \theta_g)$.
        \Repeat
            \State Sample batch $\{(\mathcal{T}_k, \mathcal{S}_k)\}_{k=1}^M \sim \mathcal{D}$, where $\mathcal{T}_k=\{\tau_{k,i}\}_{i=1}^{N_T}$ are trajectories and $\mathcal{S}_k=\{(s_{k,j}, a_{k,j})\}_{j=1}^{N_s}$ are state-action pairs for task $k$.
            \State For each task $k \in [1,M]$, randomly select $N_I$ trajectories $\mathcal{T}'_k \subseteq \mathcal{T}_k$.
            \State Compute task embeddings $(\psi_k)_{k=1}^M = (f_{\theta_f}(\mathcal{T}'_k))_{k=1}^M$.
            \State Predict actions $(\hat{a}_{k,j})_{k \in [1,M], j \in [1,N_s]} = (g_{\theta_g}(s_{k,j}, \psi_k))_{k \in [1,M], j \in [1,N_s]}$.
            \State Compute loss $\mathcal{L}(\theta) \leftarrow \frac{1}{MN_s}\sum_{k=1}^M \sum_{j=1}^{N_s} \|\hat{a}_{k,j} - a_{k,j}\|^2_2$.
            \State Update parameters $\theta \leftarrow \theta - \alpha \nabla_{\theta} \mathcal{L}(\theta)$.
        \Until{convergence}
        \State {\bfseries Output:} Learned parameters $\theta = (\theta_f, \theta_g)$.
    \end{algorithmic}
\end{algorithm}
  \end{minipage}
\end{figure}

\subsubsection{Experimental performance of SGI and SAI}

\paragraph{Setup} To evaluate SGI and SAI, we reran the six experiments in the main paper using SGI and SAI as drop-in replacements for SRI. Both used the same architecture as SRI, except that SGI's demonstration encodings were 3-dimensional (to match goal size) and SAI's final output was four-dimensional (to match action size). SAI's outputs were not clipped or squeezed during training. The action labels for SAI's prediction task were taken from Meta-World's oracular reach/pick-place policies, with the same minor modifications as described in Appendix \ref{app:demonstrationdetails}. SAI used the same states for labeling (with actions) as SRI (with rewards), though without the pick-place/reach proportion interpolation described in Appendix \ref{app:sritrainingdetails}. Both SGI and SAI were trained for 500 epochs rather than SRI's 2000, as we noted that any resulting loss of precision was insignificant when it could not be hacked by an RL agent.

\paragraph{Results}

\begin{table}[t]
\caption{\textbf{Pick-place (\textsc{gesture} demos).} Mean normalized goal proximity ($\uparrow$) $\pm$ standard error (30 trials).}
\label{tab:app-pickplace_vertical}
\centering
\vspace{0.8em}
\begin{tabular}{lc}
\toprule
\textbf{Method} & \textbf{Prox.\ ($\uparrow$)} \\
\midrule
SRI   & $0.822 \pm 0.051$ \\
SGI   & $0.790 \pm 0.002$ \\
SAI   & $0.791 \pm 0.003$ \\
GAIL  & $-0.001 \pm 0.000$ \\
AIRL  & $-0.001 \pm 0.001$ \\
BC    & $-0.001 \pm 0.000$ \\
GT RL & $0.903 \pm 0.011$ \\
\bottomrule
\end{tabular}
\end{table}

Overall, SGI and SAI performed comparably to SRI, and performed notably \textit{better} than SRI on some reach tasks where the goal is indicated with high certainty by the demonstrations (Tables \ref{tab:app-pickplace_vertical} and \ref{tab:app-reachdata}, Figure \ref{fig:app-offset_reach}, and lower noise levels in Figure \ref{fig:app-noisy_reach}). SGI and SAI also performed better than SRI when given small training datasets, for both reach and pick-place tasks (Table \ref{tab:app-sri-sgi-sai-data-efficiency}). We attribute this latter observation to the fact that slight imperfections in SRI's learned reward may change the optimal policy set dramatically (due to reward hacking), while the degradation of inferred policies with imprecisions in SGI and SAI models is smoother.

\begin{wrapfigure}[21]{R}{0.45\textwidth}
    \centering
    \includegraphics[width=0.44\textwidth]{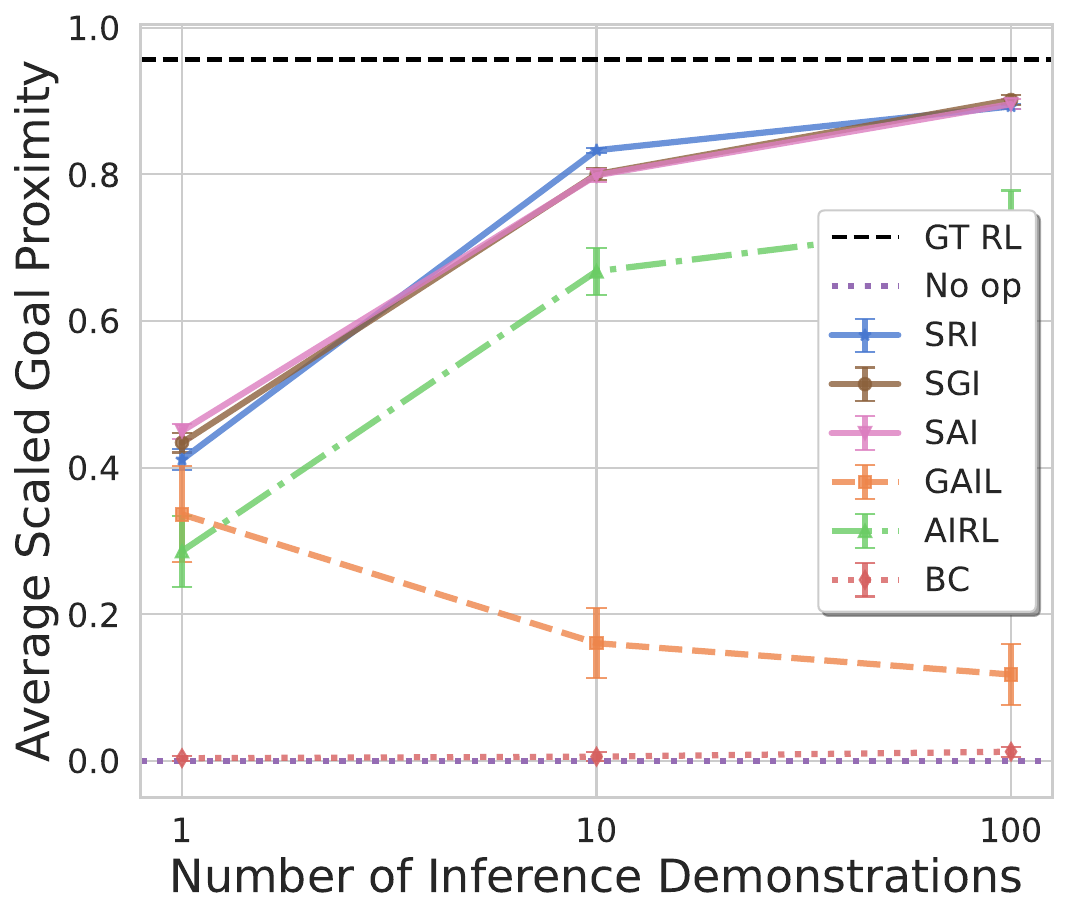}
    \caption{\textbf{Demonstration efficiency.} Mean normalized goal proximity of SRI, SGI, SAI, and imitation baselines on Meta‑World reach tasks using \textsc{noisy}$_{0.87}$ demonstrations, plotted versus the number of demos per task. Error bars show $\pm$ standard error over 30 trials (Sec.~\ref{sec:metrics}).}
    \label{fig:app-reach_dem_efficiency}
\end{wrapfigure}

\paragraph{Discussion} Do these results indicate that one should always use SGI and SAI over SRI? Not necessarily, for two reasons. 

First, when the dynamics of the environment are expected to change, a policy can be retrained with respect to SRI's learned reward, but SAI's inferred policy (and possibly SGI's, if its output is used as the latent variable of a policy, as in our experiments) will become irrevocably incorrect until new data is collected.  

Second, we expect that SRI's requisite dataset will be much easier to collect in many real-world situations, as reward labels may be easier to provide than goal and action labels. In the case of SGI, labeling behavior with a goal (latent representation) requires that the practitioner already know a factorization of the reward function or optimal policy of the underlying task using the given goal, which is not a realistic assumption for many real-world tasks. 

SAI suffers from a similar problem: the practitioner must know the optimal action in any given state, in particular in terms of the agent's action space (e.g., a robot's joint actuations). Not only does this require deep problem-specific expertise even for simple tasks, but it also precludes the agent ever achieving superhuman performance on the target task.

The conclusion of the present results is therefore that a wide variety of intent objects (i.e., labels) have the potential to work well in this supervised intent inference framework, and practitioners should use whichever label type is both easily available and appropriate for their target deployment.

\begin{figure}[t]
  \centering
  \begin{subfigure}{.48\linewidth}
      \includegraphics[width=\linewidth]{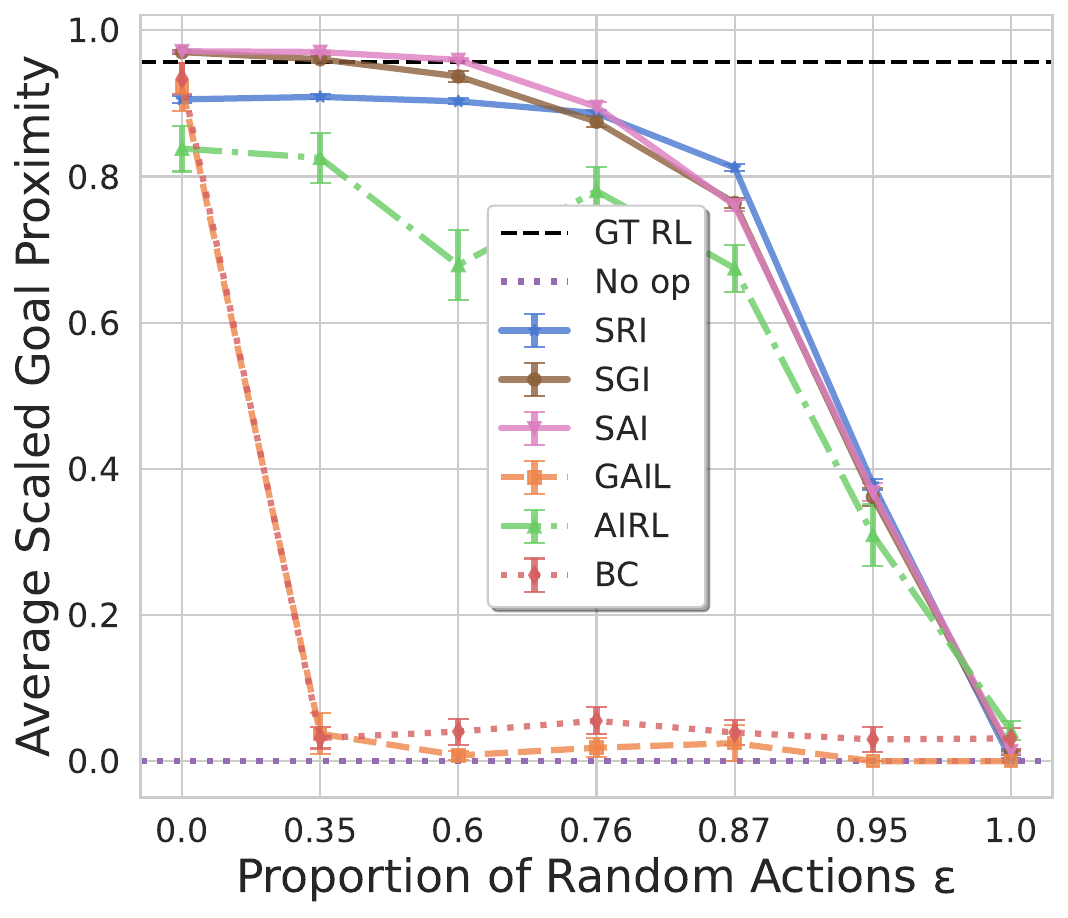}%
      \caption{\textbf{Random actions}: \textsc{noisy}$_\varepsilon$}
      \label{fig:app-noisy_reach}
    \end{subfigure}
  \hfill
  \begin{subfigure}{.48\linewidth}
      \includegraphics[width=\linewidth]{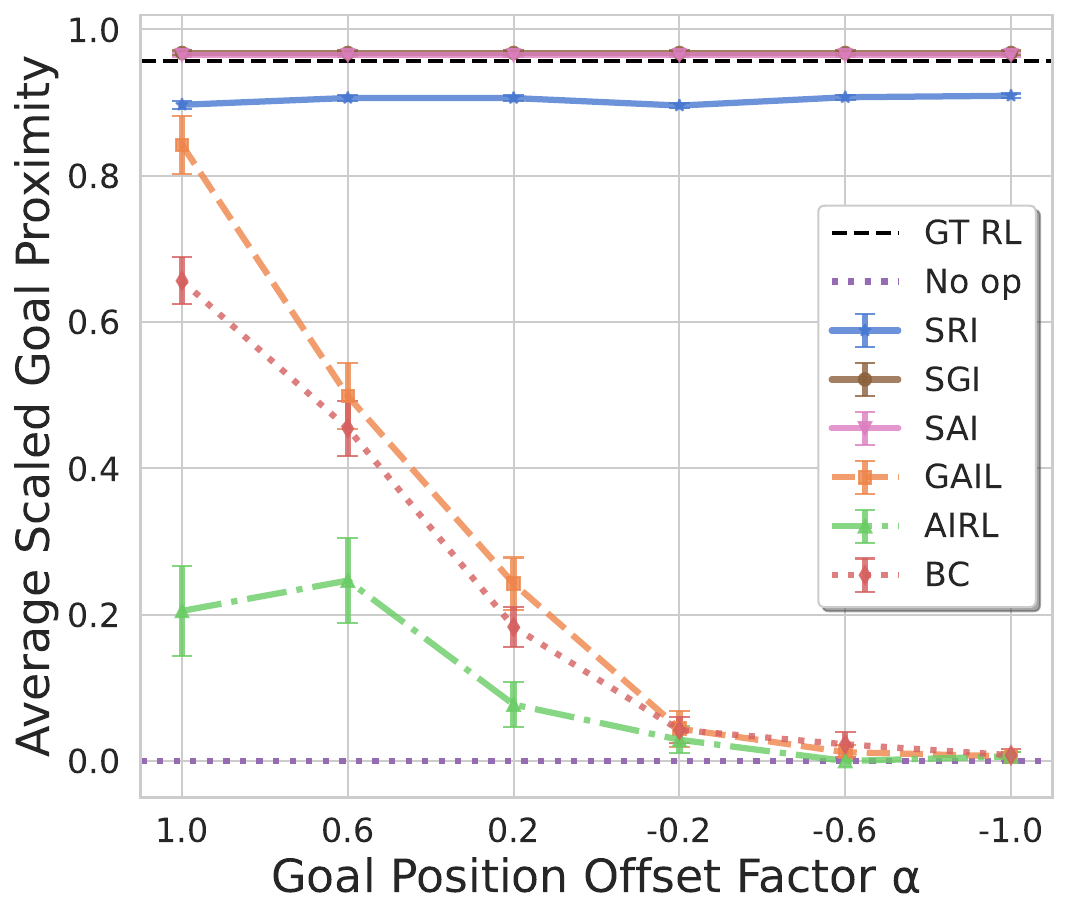}%
      \caption{\textbf{Reach to offset goal}: \textsc{psychic}$_\alpha$}
      \label{fig:app-offset_reach}
      \end{subfigure}
  %
  \caption{\textbf{Robustness of SRI, SGI, and SAI to suboptimal demonstrations in reach tasks.}
           Error bars: standard error over 30 trials. All three supervised methods show near-perfect recovery of optimal policies given \textit{invertible} suboptimalities (\textbf{b}).}
  \label{fig:app-reach_robustness}
\end{figure}

\begin{table}[p] 
  \centering
  \caption{Data-efficiency comparison of \textbf{SRI}, \textbf{SGI}, and \textbf{SAI} on two Meta-World tasks.  Each cell shows mean normalized goal proximity ($\uparrow$) $\pm$ standard error over 30 RL rollout trials, after training on the indicated number of tasks and labeled observations per task.}
  \label{tab:app-sri-sgi-sai-data-efficiency}
  \small
  \vspace{0.6em}

  \begin{subtable}[t]{\linewidth}
    \centering
    \subcaption{Reach task (\textsc{hard} demos)}
    \label{tab:app-reachdata}

    \textbf{SRI}\\[0.2em]
    \begin{tabular}{lccc}
      \toprule
      \multirow{2}{*}{\# Tasks} &
      \multicolumn{3}{c}{\# labeled observations / task} \\
      \cmidrule(lr){2-4}
      & $100$ & $1\,000$ & $10\,000$ \\
      \midrule
      $1\,280$ & $0.860 \pm 0.006$ & $0.916 \pm 0.004$ & $0.930 \pm 0.003$ \\
      $320$    & $0.780 \pm 0.007$ & $0.866 \pm 0.006$ & $0.893 \pm 0.006$ \\
      $80$     & $0.495 \pm 0.017$ & $0.792 \pm 0.008$ & $0.812 \pm 0.010$ \\
      \bottomrule
    \end{tabular}\\[0.8em]

    \textbf{SGI}\\[0.2em]
    \begin{tabular}{lc}
      \toprule
      \# Tasks & Prox.\ ($\uparrow$) \\
      \midrule
      $1\,280$ & $0.972 \pm 0.001$ \\
      $320$    & $0.925 \pm 0.013$ \\
      $80$     & $0.927 \pm 0.010$ \\
      \bottomrule
    \end{tabular}\\[0.8em]

    \textbf{SAI}\\[0.2em]
    \begin{tabular}{lccc}
      \toprule
      \multirow{2}{*}{\# Tasks} &
      \multicolumn{3}{c}{\# labeled observations / task} \\
      \cmidrule(lr){2-4}
      & $100$ & $1\,000$ & $10\,000$ \\
      \midrule
      $1\,280$ & $0.962 \pm 0.003$ & $0.960 \pm 0.004$ & $0.958 \pm 0.004$ \\
      $320$    & $0.924 \pm 0.013$ & $0.929 \pm 0.010$ & $0.923 \pm 0.012$ \\
      $80$     & $0.859 \pm 0.018$ & $0.846 \pm 0.018$ & $0.807 \pm 0.021$ \\
      \bottomrule
    \end{tabular}
  \end{subtable}

  \vspace{1.2em}

  \begin{subtable}[t]{\linewidth}
    \centering
    \subcaption{Pick-place task (\textsc{hard} demos)}
    \label{tab:app-ppdata}

    \textbf{SRI}\\[0.2em]
    \begin{tabular}{lccc}
      \toprule
      \multirow{2}{*}{\# Tasks} &
      \multicolumn{3}{c}{\# labeled observations / task} \\
      \cmidrule(lr){2-4}
      & $100$ & $1\,000$ & $10\,000$ \\
      \midrule
      $1\,280$ & $0.614 \pm 0.076$ & $0.715 \pm 0.069$ & $0.784 \pm 0.052$ \\
      $320$    & $0.221 \pm 0.067$ & $0.768 \pm 0.052$ & $0.708 \pm 0.059$ \\
      $80$     & $0.018 \pm 0.021$ & $0.269 \pm 0.065$ & $0.502 \pm 0.085$ \\
      \bottomrule
    \end{tabular}\\[0.8em]

    \textbf{SGI}\\[0.2em]
    \begin{tabular}{lc}
      \toprule
      \# Tasks & Prox.\ ($\uparrow$) \\
      \midrule
      $1\,280$ & $0.765 \pm 0.005$ \\
      $320$    & $0.757 \pm 0.007$ \\
      $80$     & $0.764 \pm 0.006$ \\
      \bottomrule
    \end{tabular}\\[0.8em]

    \textbf{SAI}\\[0.2em]
    \begin{tabular}{lccc}
      \toprule
      \multirow{2}{*}{\# Tasks} &
      \multicolumn{3}{c}{\# labeled observations / task} \\
      \cmidrule(lr){2-4}
      & $100$ & $1\,000$ & $10\,000$ \\
      \midrule
      $1\,280$ & $0.783 \pm 0.004$ & $0.795 \pm 0.002$ & $0.795 \pm 0.003$ \\
      $320$    & $0.758 \pm 0.005$ & $0.773 \pm 0.006$ & $0.783 \pm 0.005$ \\
      $80$     & $0.702 \pm 0.007$ & $0.750 \pm 0.005$ & $0.744 \pm 0.008$ \\
      \bottomrule
    \end{tabular}
  \end{subtable}

  \vspace{1.2em}

  \textbf{Baselines (mean $\pm$ standard error)}\\[0.3em]
  \begin{tabular}{lcc}
    \toprule
    Method & Reach & Pick-place \\
    \midrule
    GAIL  & $-1.81 \pm 0.37$ & $-0.004 \pm 0.002$ \\
    AIRL  & $-1.58 \pm 0.30$ & $-0.001 \pm 0.000$ \\
    BC    & $-0.93 \pm 0.08$ & $-0.002 \pm 0.001$ \\
    GT-RL & $0.957 \pm 0.005$ & $0.903 \pm 0.011$ \\
    \bottomrule
  \end{tabular}
\end{table}

\subsection{Set Transformer Ablation}

Motivated by work showing that set functions can be universally approximated by architectures that combine element-wise encodings with simple additive pooling \citep{zaheer2017deepsets}, we tested whether we can replace our set transformer (see Appendix \ref{app:arch}) with a simple sum of trajectory encodings: $\psi = \sum_{i=1}^n \psi_i$. Specifically, we reran the demonstration-efficiency experiment using both the original variants of SRI, SGI, and SAI, and variants in which the task encoder is an additive pooling layer (Figure \ref{fig:app-arch_ablation}). The performance improvement offered by the transformer-set-transformer (TST) architecture to SRI and SAI is significant but slight for higher demonstration counts; however, prior results (not shown here) suggest that this divergence grows when the training dataset size is decreased. We suspect, though are not sure, that the exceptionally poor performance of 100-demonstration SGI with additive pooling is due to difficulties managing output magnitudes when summing large numbers of demonstration encodings, which is less of a problem for SRI and SAI (the resulting policies of which mostly remain the same regardless of model output scale in our experiments).\footnote{Note that an architecture with a \textit{mean} pooling layer need not be a universal set function approximator if the input sets have varying sizes; although that is not the case in our experiments, we avoid mean pooling for this reason.}
\begin{figure}
\centering
    \includegraphics[width=0.65\linewidth]{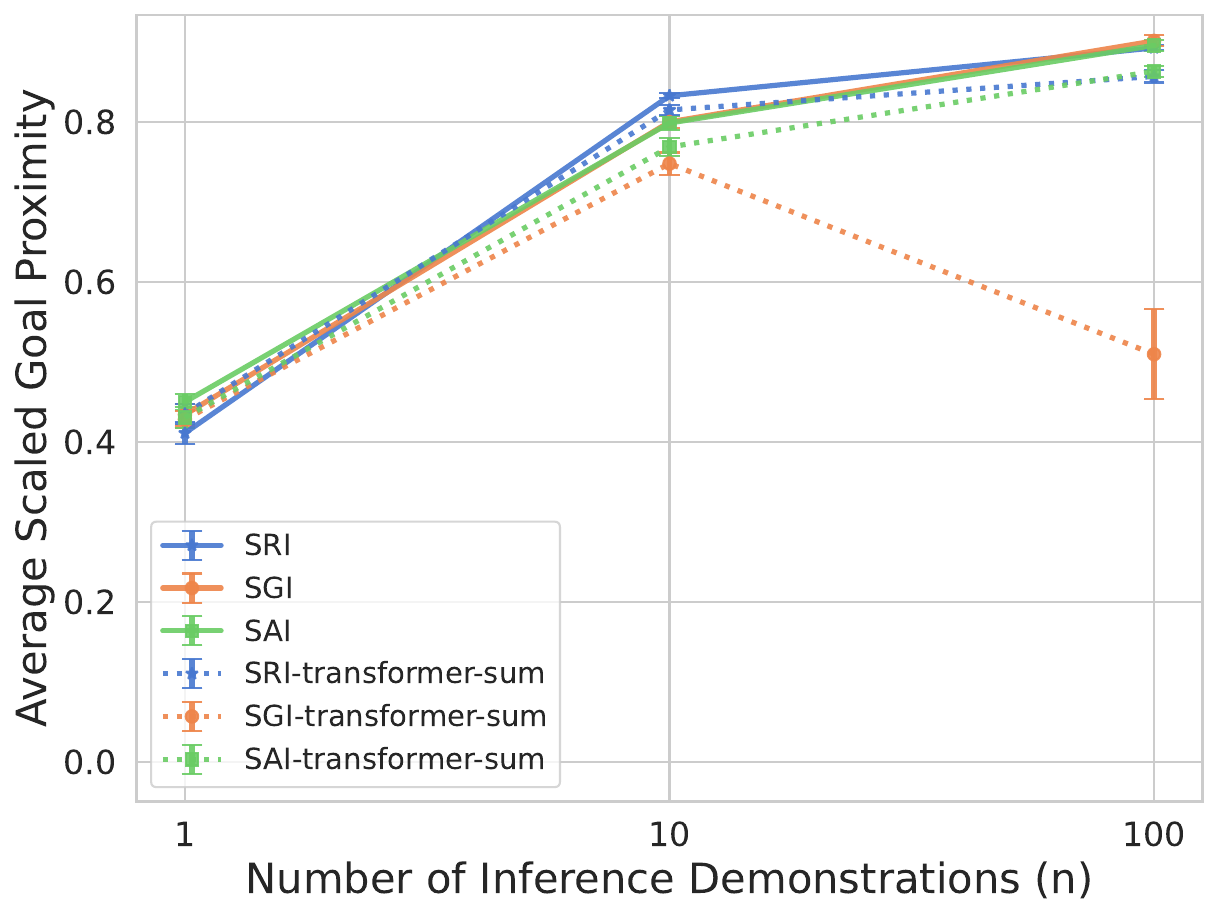}
    \captionof{figure}{%
      \textbf{Set Transformer Ablation}. Mean normalized goal proximity ($\uparrow$) in a reach task for the standard TST (transformer-set-transformer) architecture (solid line) versus an architecture which replaces the set transformer with a simple additive pooling layer (dotted line). Demonstrations are \textsc{noisy}$_{0.87}$.}
    \label{fig:app-arch_ablation}
\end{figure}

\section{Meta-World Experimental Details}
\label{app:experimentaldetails}

\subsection{Tasks}

Goals were selected randomly with $x \in [-0.3, 0.3]$, $y \in [0.4, 0.7]$, and $z \in [0.05, 0.3]$. For pick-place tasks, the object is randomly initialized per-episode for $x \in [-0.1, 0.1]$, $y \in [0.6, 0.7]$, and goals are chosen sufficiently far away to avoid initial success due to random object placement. All tasks used a horizon of 500. The default shaped reward functions were used both for SRI reward samples and ground-truth RL, except for the extra 5.0 success reward.

Note that the Meta-World reward is not computable from the default state representation; therefore, we provided SRI's state encoder with augmented states during both training and RL. Specifically, we provided SRI with: the location and starting location of the left and right pads of the gripper; the location of the ``TCP (Tool Center Point) Center''; and the initial positions of the end-effector and the object. All of these values are computable from the history, but our goal in these experiments was not to test SRI's performance in partially observable environments. Note that the learned policy did not see these augmented state dimensions. 

Unfortunately, we could not augment states for the baseline algorithms to include initial position information, as doing so provided the adversarial discriminators with a ``hack'' to distinguish demonstrations from generated behavior, but we provided them with the TCP center and pad locations.

To simplify both supervised learning and reinforcement learning, we shifted and scaled all rewards to have the range $[-3, 3]$ (after adding the extra success reward of 5.0).

\subsection{Demonstration Details}
\label{app:demonstrationdetails}

All demonstration classes were generated by oracular policies with randomly sampled reach goals. Unless otherwise specified, we provided 100 demonstrations per task for \textsc{gesture}, \textsc{noisy}, and \textsc{noisy gesture}; 1 demonstration for the deterministic \textsc{psychic}; and 10 demonstrations for \textsc{hard}.

\textsc{Noisy}$_\varepsilon$ details: the end-effector starts in the default location and reaches directly towards the goal for 150 timesteps, but with probability $\varepsilon$ each timestep of instead reaching towards a random location.

\textsc{Noisy gesture}$_\varepsilon$ details: uses random actions like \textsc{noisy}$_\varepsilon$, but otherwise identical to \textsc{gesture} (including a horizon of 50).

\textsc{Psychic}$_\alpha$ details: Given goal position $g$ and x-y origin $o_{x,y}$ = (0, 0.55), the end-effector reaches deterministically from its starting position towards target $t_{x, y} = o_{x, y} + \alpha(g_{x,y} - o_{x, y}).$

\textsc{Hard} details: the end-effector starts in a random position, then over 250 timesteps draws a 0.1-radius circle around the location of the goal mirrored through the x-y-z origin $o$ = (0, 0.55, 0.175). Note that the end-effector cannot always reach the far point of this circle, so the mirrored goal cannot be determined through averaging alone.

For \textsc{gesture} tasks, the end-effector starts in a random position satisfying $x \in [-0.4, 0.4], y\in[0.4,0.8],z\in[0.1,0.4]$.

For all reach demonstrations, the oracular policy used a speed multiplier ($p$ in the code) of 30.0, ensuring that the $(x, y, z)$ components of its actions were nearly always clipped to [-1, 1]. For all pick-place demonstrations, the same speed multiplier was used, but the clipping range was decreased to [-0.3, 0.3], as we found that the policy tended to move too quickly and drop the object otherwise.

\subsection{SRI Training Details}
\label{app:sritrainingdetails}

\textbf{Data quantity.} Except where otherwise noted, SRI received 1,280 tasks, each with 100 demonstrations and 10,000 state-reward pairs. For pick-place tasks, 80\% of states were from random reaching, while 20\% were from random pick-placing.

\textbf{State-reward sampling.} For all tasks, SRI received a randomly sampled state-reward dataset (see Section~\ref{sec:datasetconstruction}) with states produced by the robot end-effector attempting to reach to random locations on and around the table while randomly opening and closing its gripper. For pick-place tasks, SRI also received examples of states from an oracular pick-place agent bringing the object to random positions. SRI could not infer goals from such states, as they were randomly shuffled and uncorrelated with the goal of the task it was attempting to infer.

\textbf{Training.} SRI was trained for 2,000 epochs on all tasks with a batch size of 16 and learning rate of 0.0003, using the Adam optimizer \citep{kingma2015adam}. In general, the hyperparameters we used for SRI and RL were found through extensive interactive experimentation rather than structured sweeps.

As mentioned in Section \ref{sec:results}, we found hacking of SRI's rewards to be a particular problem in pick-place tasks, where a lack of model capacity forced SRI to trade off modeling the success reward on one hand and the grasping and pre-grasping shaped rewards on the other. SRI generally chose to optimize the success reward, as doing so was optimal for minimizing MSE, but this caused pick-place policies trained with the resulting reward to often fail to grasp the object at all. In order to encourage SRI to focus on modeling the non-success rewards, we linearly increased the proportion of pick-place state samples in the state-reward dataset from 5\% to 20\% over the course of training the reward model. However, the total proportion of reach and pick-place state samples per task was still 0.8 and 0.2, respectively; when necessary, we therefore sampled reach states with replacement to achieve the required total number of states per task.

\subsection{Reinforcement Learning}

For learning policies with SRI's learned reward functions, we used Truncated Quantile Critics (TQC; \citealp{kuznetsov2020controllingoverestimationbiaswithtruncatedmixtureofcontinuousdistributionalquantilecritics}) for reach tasks and Proximal Policy Optimization (PPO; \citealp{schulman2017proximalpolicyoptimizationalgorithms}) for pick-place tasks, each as implemented in Stable-Baselines3 \citep{stable-baselines3}. For reach tasks, we trained multi-task policies by conditioning the policy on the task representation $\psi$; for pick-place tasks, we trained task-specific policies (though all tasks within each trial were inferred by a single SRI model).

The reason we used PPO for pick-place tasks and TQC for reach tasks is empirical: we found that PPO learned slowly in reach tasks and that TQC often failed to learn in pick-place tasks. All RL and baseline policies were trained for 5,000,000 environment interactions for reach tasks and 10,000,000 for pick-place tasks.

All networks of all RL policies were two-layer, hidden dimension 512 MLPs. For baselines, the actor had this structure, but the critic had the same structure as SRI's state encoder: a two-layer MLP with hidden dimension 256.

Baseline hyperparameters were set to equal RL hyperparameters where appropriate (e.g., number of interactions and network size), and otherwise were taken from their respective papers as much as possible. Behavioral cloning was trained for 50 epochs.

RL algorithms used batch sizes of 128, learning rates of 0.0001, and $\gamma=0.9$. TQC used two critics, a rollout buffer size of 10,000,000, one update per transition, and a soft update coefficient of 0.005; PPO collected 2,048$*$16 transitions between updates and trained for 10 epochs per transition batch.

Note that the strangely poor behavior of the baselines in some conditions, such as AIRL in single-demonstration tasks and GAIL in noisy tasks, may in part be due to Meta-World tasks violating their assumptions. For example, GAIL assumes infinite-horizon tasks \citep{ho2016generative}. Nevertheless, as dominant imitation learning algorithms, we still believe them to be important baselines for comparison.

\subsection{Metrics and Statistics}
\label{app:metrics}
The evaluated metric is average normalized goal proximity: 1 minus the distance to the goal of the end-effector (measured at the TCP center; for reach tasks) or object (for pick-place tasks), scaled such that the initial distance is 1, and averaged across all timesteps and trials. To avoid distraction by large negative proximity values in plots, we clip average proximity of each trial in plots to a minimum of 0 (but do not clip in tables).

All methods and all settings in all experiments were run for 30 trials (including dataset resampling), and statistical uncertainty was quantified using standard error without Bonferroni correction.

\subsection{Baselines}
\label{app:baselines}
We used three single-task imitation learning baselines, all from the Imitation library \citep{gleave2022imitation}: behavioral cloning (BC; \citealp{pomerleau1988alvinn}), generative adversarial imitation learning (GAIL; \citealp{ho2016generative}), and adversarial inverse reinforcement learning (AIRL; \citealp{fu2018learning}). These baselines are single-task and are included to illustrate how optimality-assuming methods degrade with suboptimal demonstrations. We also include policies trained with ground-truth rewards (GT-RL) as a ceiling baseline.

\subsection{Additional Code Attribution}
Table~\ref{tab:assets} lists the external assets used in this work, along with their licences and versions.
\begin{table}[h]
  \caption{External assets relied on in this work.}
  \label{tab:assets}
  \small
  \renewcommand{\arraystretch}{1.2}
  \centering
  \begin{tabularx}{\textwidth}{@{} l X l l X @{}}
    \toprule
    Asset & Authorship / citation & License & Version / commit & Special terms of use \\
    \midrule
    Stable‐Baselines3 & 
    \citet{stable-baselines3}
    & MIT & 2.2.1 & None beyond MIT licence \\[2pt]

    Meta-World & 
    \citet{yu2019metaworld}
    & MIT & v2.0.0 & Requires the MuJoCo physics engine (Apache-2.0) for simulation \\[2pt]

    Set Transformer & 
    \citet{lee2019set} & MIT & \texttt{73432c6} & None beyond MIT licence \\[2pt]

    Imitation & 
    \citet{gleave2022imitation} & MIT & 1.0.1 & None beyond MIT licence \\
    
    MuJoCo & \citet{todorov2012mujoco}
    & Apache-2.0 & 2.3.7 & Open-source; no additional restrictions \\ \bottomrule
  \end{tabularx}
\end{table}

\subsection{Computational Resources}

All of our experiments were performed on a wide variety of NVIDIA GPUs, from 1080Ti to L40S and A100. VRAM was not generally a limiting factor, and we are certain that 11 GB is sufficient for the batch sizes that we used. We used 2 CPU cores for SRI/SGI/SAI model training, 16 cores for RL and baseline training, and 4-16 cores for evaluation. We used 40 GB RAM for all experiments. All model training and RL runs finished in under 24 hours each, and substantially faster on new GPUs.

\section{Experimental details for Shah et al.~(2019) gridworld benchmark}
\label{app:shah_gridworld}

To directly compare SRI with the closest prior work on reward inference from arbitrary suboptimal behavior, we reran the gridworld benchmark of \citet{shah2019feasibility} and added SRI as an additional method.
We reused the original Shah et al.\ code for the planner-learning baselines as much as possible.

\subsection{Benchmark and evaluation protocol}

Shah et al.'s benchmark uses randomly generated gridworlds with a $14\times 14$ traversable interior, seven non-zero reward cells, and several synthetic demonstrator classes including optimal, naive, sophisticated, myopic, overconfident, and underconfident agents, together with Boltzmann-noisy variants of each \citep{shah2019feasibility}.
Their evaluation metric is the discounted true return obtained by planning optimally with the inferred reward, reported as a percentage of the return obtained when planning with the ground-truth reward.

Each task is represented as a $16\times 16$ tensor with an always-walled border, corresponding to the same $14\times 14$ traversable interior used in the original benchmark.
The action space has five actions: north, south, east, west, and stay.
By default, each task contains seven non-zero reward cells, the start state is fixed at the grid center, and transition noise is $0.2$.
Non-\textsc{stay} actions incur a living reward of $-0.01$, and the discount factor is $\gamma=0.95$.
We evaluate all methods with the same percent-reward metric as in \citet{shah2019feasibility}.

\subsection{Data splits and trial structure}

Shah et al.'s original Algorithm~1 uses 8{,}000 total policy examples, split into 7{,}000 tasks with known rewards for planner training and 1{,}000 tasks on which rewards must be inferred \citep{shah2019feasibility}.
We use 5{,}000 planner-training tasks, 2{,}000 planner-validation tasks, and 1{,}000 reward-inference tasks per trial, preserving the same 7{,}000/1{,}000 overall budget while making validation explicit.
All figure results reported here use 30 trials, with one seed per trial.

For the assumed-planner baselines, we use 8{,}000 reward-inference tasks per trial, matching the no-known-reward setting of \citet{shah2019feasibility}.
For SRI, the figure uses the best validation checkpoint from each trial.

\subsection{Shah et al.\ baselines}

We compare against the same three method families shown in Shah et al.'s Figure~4: a fixed Boltzmann-planner assumption, a fixed optimal-planner assumption, and their Algorithm~1, which first trains a differentiable planner on reward-policy pairs and then infers rewards by gradient descent through that planner \citep{shah2019feasibility}.
We use the original TensorFlow implementation of Shah et al.'s planner architecture and training procedures.
Concretely, the planner is a value iteration network (VIN) with the original tabular setup.

\subsection{SRI with policy input}

The original SRI formulation conditions on a set of behavior trajectories and predicts rewards for queried states.
The benchmark of \citet{shah2019feasibility} instead exposes a full demonstrator policy on a tabular state space.
To make a direct comparison possible, we therefore use a policy-input variant of SRI.

For each task, we convert the demonstrator policy into a single structured demonstration tensor with one token per grid cell.
Each token contains the demonstrator's action probabilities at that state, together with a wall indicator and normalized spatial coordinates.
Because the reward in this benchmark is itself a tabular state-value map, the model predicts the full reward map jointly rather than evaluating one queried state at a time.

Architecturally, the reported runs use a compact U-Net \citep{ronneberger2015unet} policy encoder with base width 32.
The model reshapes the policy-input tensor into an $H\times W$ feature grid, applies two downsampling convolutional blocks, a bottleneck block, and two upsampling stages with skip connections, and then outputs one scalar reward per grid cell with a final $1\times 1$ convolution.
Each convolutional block consists of two $(3\times 3\ \mathrm{conv} + \mathrm{batchnorm} + \mathrm{LeakyReLU})$ layers.
The resulting reward map is flattened and trained with mean-squared error against the ground-truth tabular reward.

\subsection{SRI training details}

SRI uses the U-Net encoder described above, 100 training epochs, batch size 128, learning rate $3\times 10^{-4}$, and no additional regularization.

\end{document}